\definecolor{mydarkblue}{rgb}{0,0.08,0.45}
\newtheorem{assumption}{Assumption}
\newtheorem{lemma}{Lemma}
\newtheorem{remark}{Remark}
\DeclareMathOperator{\supp}{supp}
\newcommand*{\T}{\mathsf{T}}
\newcommand*{\G}{\mathcal{G}}
\newcommand*{\B}{\mathbf{B}}
\newcommand*{\N}{\mathbf{N}}
\newcommand*{\V}{\newmathbf{V}}
\newcommand*{\E}{\mathbf{E}}
\newcommand*{\X}{\mathbf{X}}
\newcommand*{\I}{\mathbf{I}}
\newcommand*{\Pdistr}{\newmathbb{P}}
\newcommand*{\Mhat}{\whzero{\mathbf{M}}}
\newcommand*{\NoiseCov}{\mathbf{\Omega}}
\newcommand*{\Cov}{\mathbf{\Sigma}}
\newcommand*{\InvCov}{\mathbf{\Theta}}
\newcommand*{\InvCovhat}{\whzero{\mathbf{\Theta}}}
 \newcommand\independent{\protect\mathpalette{\protect\independenT}{\perp}}
    \def\independenT#1#2{\mathrel{\rlap{$#1#2$}\mkern2mu{#1#2}}}
\newcommand\notindependent{\!\perp\!\!\!\!\not\perp\!}
\newcommand\whzero[1]{\hstretch{2}{\hat{\hstretch{.5}{#1\mkern0mu}}}\mkern-0mu}
\newdimen\arrowsize
\newcommand{\newmathbf}[1]{\mathbf{#1}\@ifnextchar_{\msubscript}{}}
\def\msubscript_#1{_{\!#1}}
\newcommand{\newmathbb}[1]{\mathbb{#1}\@ifnextchar_{\msubscript}{}}
\def\msubscript_#1{_{\!#1}}
\title{
Reliable Causal Discovery with Improved Exact Search and Weaker Assumptions
}
\author{
Ignavier Ng$^1$,~~Yujia Zheng$^{1}$,~~Jiji Zhang$^{2}$,~~Kun Zhang$^{1}$\\
$^1$ Carnegie Mellon University\\
$^2$ Hong Kong Baptist University\\
\texttt{\{ignavierng, yujiazh\}@cmu.edu, zhangjiji@hkbu.edu.hk, kunz1@cmu.edu}
}
\begin{document}

\maketitle
\begin{abstract}
Many of the causal discovery methods rely on the faithfulness assumption to guarantee asymptotic correctness. However, the assumption can be approximately violated in many ways, leading to sub-optimal solutions. Although there is a line of research in Bayesian network structure learning that focuses on weakening the assumption, such as exact search methods with well-defined score functions, they do not scale well to large graphs. In this work, we introduce several strategies to improve the scalability of exact score-based methods in the linear Gaussian setting. In particular, we develop a super-structure estimation method based on the support of inverse covariance matrix which requires assumptions that are strictly weaker than faithfulness, and apply it to restrict the search space of exact search. We also propose a local search strategy that performs exact search on the local clusters formed by each variable and its neighbors within two hops in the super-structure. Numerical experiments validate the efficacy of the proposed procedure, and demonstrate that it scales up to hundreds of nodes with a high accuracy.
\end{abstract}
\section{Introduction}
Although it is often more reliable to discover causal relationships by making use of interventions or randomized experiments, they are practically challenging, expensive, or even prohibited owing to ethical considerations. Thus, causal discovery from observational data has received considerable attention in recent decades, and has been widely applied in different fields such as genetics \citep{Peters2017elements}.

One major class of causal discovery methods is the constraint-based methods, such as PC \citep{Spirtes1991pc} and FCI \citep{Spirtes1995causal, Colombo2011learning}, that leverage conditional independence tests to estimate the skeleton and then perform edge orientation. These methods are guaranteed to asymptotically return the true Markov equivalence class (MEC) under the Markov and faithfulness assumptions. Several modifications \citep{Ramsey2006adjacency,Spirtes2014uniformly} to these constraint-based methods have been developed to allow certain types of unfaithfulness, which, however, generally give rise to weaker claims and are not guaranteed to estimate the true MEC.

Another popular approach is the GES \citep{Chickering2002optimal} algorithm that searches in the space of MECs greedily by maximizing a well-defined score, such as the Bayesian information criterion (BIC) \citep{Schwarz1978estimating} score. It starts with an empty structure and consists of two phases: (1) adding edges until a local maximum is found, and (2) removing edges until a local maximum is reached. In spite of the greedy strategy, GES converges in the large sample limit to the true MEC under the Markov and faithfulness assumptions, similar to the aforementioned constraint-based methods.

Recently, NOTEARS \citep{Zheng2018notears} casts the Bayesian network structure learning task into a continuous constrained optimization problem with the least squares objective, using an algebraic characterization of directed acyclic graph (DAG). Subsequent work GOLEM \citep{Ng2020role} adopts a continuous unconstrained optimization formulation with a likelihood-based objective. For NOTEARS, it remains unclear of the required assumptions for asymptotic correctness, whereas GOLEM adopts the generalized faithfulness assumption \citep{Ghassami2020characterizing} to learn linear Gaussian DAGs, which could be converted to their MECs for causal interpretation \citep{Spirtes2018search}. These methods enable the application of numerical solvers and GPU acceleration, which thus are scalable to large graphs. However, they are only guaranteed to find a local optimum of the optimization problem, and therefore the quality of the solution in practice may not be guaranteed, even in the asymptotic case.

Another line of research focuses on weakening the faithfulness assumption required for asymptotic correctness of the search results, since, given finite samples, approximate violations of faithfulness occur surprisingly often, especially when there is a large number of variables \citep{Uhler2013geometry}. For instance, exact search methods find the optimal Bayesian network based on a predefined score function, such as dynamic programming (DP) \citep{Koivisto2004exact,Ott2004finding,Singh2005finding,Silander2006simple}, A* \citep{Yuan2011learning,Yuan2013learning}, and integer programming \citep{Bartlett2017integer,Cussens2011bayesian}. The DAGs estimated by these methods can be converted to their MECs for causal interpretation \citep{Spirtes2018search}. Note that the approaches based on sparsest permutation (SP) \citep{Raskutti2018learning} and Boolean satisfiability solver (SAT) \citep{Hyttinen2013discovering,Hyttinen2014constraint} can be viewed as instances of exact methods. \citet{Lu2021improving} further demonstrated that these exact methods may produce correct results in cases where methods relying on faithfulness fail.

Due to the large search space of possible DAGs \citep{Chickering1996learning,He2015counting}, exact search methods are feasible only for small structures. Therefore, \emph{super-structure} has been adopted to constrain the search space \citep{Perrier2008finding,Tsamardinos2006maxmin}, which is defined to be an undirected graph that restricts the search to candidate DAGs whose skeleton is its subgraph. However, most of these methods rely on discovering the skeleton of the true DAG for use as a super-structure, utilizing estimation methods like MMPC \citep{Tsamardinos2006maxmin}, which require the faithfulness assumption to be asymptotically correct. Under approximate violations of faithfulness, these skeleton estimation methods may miss some edges owing to unfaithful conditional independencies in the data distribution; thus, further exact search procedures are guaranteed to miss those edges.

{\bf Contributions.} \ \ \
In this work, we introduce several strategies to improve the scalability of exact search in the linear Gaussian setting, giving rise to a more reliable causal discovery procedure.
Our main contributions can be summarized as follows:
\begin{itemize}[leftmargin=2.5em]
  \item We develop a super-structure estimation method based on the support of inverse covariance matrix of the data distribution, and show that it is asymptotically correct under assumptions strictly weaker than faithfulness (or, more specifically, than triangle-faithfulness). We combine this with exact search method like DP or A* to reduce search space.
  \item To further scale up exact search, we develop a local search strategy, called Local A*, on the local clusters formed by each variable and its neighbors within two hops in the super-structure.
  \item We demonstrate the efficacy of our super-structure estimation method and local search strategy by conducting extensive experiments, and show that it scales up to hundreds of nodes with a high accuracy.
\end{itemize}

{\bf Paper organization.} \ \ \
We review the common assumptions for causal discovery and the linear structural equation model (SEM) in Section \ref{sec:background}. In Section \ref{sec:super_structure}, we establish weaker variants of faithfulness and show how they could be used to learn a sound super-structure. We further formulate an improved exact search strategy in Section \ref{sec:improved_exact_search}. The empirical studies in Section \ref{sec:experiment} validate our theoretical results and the efficacy of the proposed procedure. We then conclude our work in Section \ref{sec:conclusion}.
\section{Background}\label{sec:background}
We first review the concepts of causal Bayesian networks and some commonly used assumptions that are related to our further analysis. We then give a brief overview of the linear SEM.

\subsection{Causal Bayesian Network and Common Assumptions}\label{sec:causal_assumptions}
Let $\G=(\V, \E)$ be a DAG with vertex set $\V = \{X_1, \dots, X_d\}$ in which each node $X_i$ corresponds to a random variable. Denote $\X=(X_1, \dots, X_d)$ as the random vector concatenating all variables and its associated probability distribution $\Pdistr$. Let $\X_{pa(i)}$ be the set of parental nodes of $X_i$ in $\G$ such that there is a directed edge from $X_j\in \X_{pa(i)}$ to $X_i$, or $X_j\rightarrow X_i \in \E$. We assume \textit{causal sufficiency}, i.e., no hidden variables, throughout the paper.

In a \emph{Bayesian network}, the distribution $\Pdistr$ is assumed to be Markov w.r.t. to DAG $\G$, as defined below.
\begin{assumption}[Markov]
\label{assump:markov}
Given a DAG $\G$ and distribution $\Pdistr$ over the variable set $\V$, every variable $X$ in $\V$ is probabilistically independent of its non-descendants given its parents in $\G$.
\end{assumption}
A \emph{causal Bayesian network} can be viewed as a Bayesian network where the directed edges are provided a causal meaning, which thereby allows it to answer interventional queries \citep{Koller09probabilistic}. In general, there are many DAGs that induce the same conditional independence (CI) relations with the distribution $\Pdistr$, and are said to be \emph{Markov equivalent}. The \emph{Markov equivalence class} (MEC) consists of all DAGs that entail the same conditional independence (CI) relations as $\G$ does, and is uniquely determined by its skeleton and v-structures \citep{Pearl2009causality}. \emph{V-structure} is defined to be a collider $X\rightarrow Y\leftarrow Z$ where $X$ and $Z$ are not adjacent in $\G$, therefore referred to as an \emph{unshielded collider}. If $X$ and $Z$ are adjacent, then it is called a \emph{shielded collider}.

The following faithfulness assumption is commonly used to relate the CI relations in the distribution to the DAG, and can be thought of as the converse to the Markov assumption.
\begin{assumption}[Faithfulness \citep{Spirtes2001causation}]
\label{assump:faith}
Given a DAG $\G$ and distribution $\Pdistr$ over the variable set $\V$, $\Pdistr$ implies no CI relations not already entailed by the Markov assumption.
\end{assumption}
Under the Markov and faithfulness assumptions, constraint-based methods such as PC have been shown to asymptotically output the correct MEC. However, in the finite sample regime, the faithfulness assumption is sensitive to statistical testing errors when inferring the CI relations, and its approximate violations occur surprisingly often, especially when there is a large number of variables \citep{Uhler2013geometry}. Thus, different relaxations of faithfulness have been proposed, such as orientation-faithfulness, adjacency-faithfulness \citep{Ramsey2006adjacency}, and triangle-faithfulness \citep{Zhang2008detection}, which we review in Appendix \ref{sec:supp_causal_assumptions}.

Clearly, the triangle-faithfulness assumption is a consequence of adjacency-faithfulness. Based on these weaker assumptions, different modifications to constraint-based methods have been proposed, such as Conservative PC \citep{Ramsey2006adjacency} and Very Conservative SGS \citep{Spirtes2014uniformly}. As their names suggest, these methods make weaker claims about the estimated causal structure and therefore are not guaranteed to estimate the true MEC. On the other hand, \citet{Raskutti2018learning} proposed the following assumption and show that it is strictly weaker than faithfulness.
\begin{assumption}[Sparsest Markov representation (SMR) \citep{Raskutti2018learning}]
\label{assump:smr}
Given a DAG $\G$ and distribution $\Pdistr$ over the variable set $\V$, the MEC of $\G$ is the unique sparsest MEC that satisfies the Markov assumption with $\Pdistr$.
\end{assumption}
\citet{Forster2017frugal} referred to the above assumption as the (unique) frugality assumption, and argued that it has multiple desirable properties compared to faithfulness. Under the SMR assumption, the SP method \citep{Raskutti2018learning} has been shown to produce asymptotically correct results. \citet{Raskutti2018learning} also conjectured that SP reaches the information-theoretic limit in the sense that the SMR assumption may be the weakest assumption guaranteeing the asymptotic correctness of any method for learning the true MEC. It is worth noting that SP can be viewed as an instance of exact score-based method. The study by \citet{Lu2021improving} demonstrated that causal discovery methods that rely on the faithfulness assumption (e.g., PC, GES) may output sub-optimal solutions in various cases, whereas exact methods (e.g., SP, A*, SAT) are able to produce the correct results. Therefore, in this work, we aim at improving the scalability of exact score-based methods for reliable causal discovery, by relying on the SMR assumption and relaxing the faithfulness assumption as much as is viable.

\subsection{Linear Structural Equation Model}\label{sec:linear_sem}
Given a linear SEM, each random variable follows the relationship $X_i =  \B_i^\T \X + N_i$, where $\B_i$ is a coefficient vector and $N_i$ is an exogenous noise variable associated with variable $X_i$. In this work we focus on the linear Gaussian model where the variables $N_i$'s follow the Gaussian distribution. The linear SEM can be written in matrix form as $\X = \B^\T \X + \N$, where $\B=[\B_1, \B_2,  \cdots, \B_d]$ corresponds to a weighted adjacency matrix, and $\N = (N_1, \dots, N_d)$ is a noise vector characterized by the covariance matrix $\NoiseCov=\operatorname{cov}[\N]=\operatorname{diag}(\sigma_1^2, \dots, \sigma_d^2)$. We assume that $\sigma_i^2>0$ for $i=1,\dots,d$ so that the distribution $\Pdistr$ has positive measure everywhere. As a standard assumption, we also assume \emph{structural minimality} \citep{Peters2017elements} which implies that the nonzero coefficients in $\B$ define the structure of $\G$, i.e., $X_j\rightarrow X_i \in \E$ if and only if the coefficient in $\B_i$ corresponding to variable $X_j$ is nonzero. Since one can always center the data, we assume $\operatorname{E}[\X]=\operatorname{E}[\N]=0$ without loss of generality. The inverse covariance matrix $\InvCov=\Cov^{-1}$ of $\X$ is given by $\InvCov = (\I-\B)\NoiseCov^{-1} (\I-\B)^{\T}$. Note that $\InvCov_{ij}=0$ if and only if $X_i \independent X_j|\V \setminus \{X_i, X_j\}$ in the linear Gaussian case.
\section{Weaker Assumptions for Super-Structure Estimation}\label{sec:super_structure}
We establish several weaker variants of faithfulness assumption, and show how they are both necessary and sufficient for learning a super-structure of the true DAG via inverse covariance estimation.

\subsection{Weaker Assumptions than Faithfulness}\label{sec:weaker_assumptions}
We describe several relaxed assumptions of faithfulness required for our super-structure estimation procedure. We first start with the specific types of faithfulness related to different kinds of colliders.

\begin{assumption}[Shielded-collider-faithfulness (SCF)]
\label{assump:scf}
Given a DAG $\G$ and distribution $\Pdistr$ over the variable set $\V$, let $X\rightarrow Y \leftarrow Z$ be any shielded collider in $\G$. Then $X$ and $Z$ are dependent conditional on any subset of $\V \setminus \{X, Z\}$ that contains $Y$.
\end{assumption}

\begin{assumption}[Unshielded-collider-faithfulness (UCF)]
\label{assump:ucf}
Given a DAG $\G$ and distribution $\Pdistr$ over the variable set $\V$, let $X\rightarrow Y \leftarrow Z$ be any unshielded collider in $\G$. Then $X$ and $Z$ are dependent conditional on any subset of $\V \setminus \{X, Z\}$ that contains $Y$.
\end{assumption}
\begin{wrapfigure}{r}{0.19\textwidth}
\vspace{-2.5em}
    \begin{minipage}{\linewidth}
    \centering\captionsetup[subfigure]{justification=centering}
    \begin{tikzpicture}[
    scale=1, every node/.style={scale=0.77},
            auto,
            node distance = 3cm, 
        ]
        every state/.style={%
            draw = black,
            fill = white,
            minimum size = 1mm
        }
        \matrix[%
            matrix of math nodes,
            column sep = 0.7cm,
            row sep = 0.7cm,
            inner sep = 0pt,
            nodes={state}
            ] (m) {%
            X & Y \\
            W & Z \\
            };
            \path[-arcsq] (m-1-1) edge node [scale=0.82,left] {$0.5$} (m-2-1)
                      (m-2-1) edge node [scale=0.82,below] {$1.0$} (m-2-2)
                      (m-1-1) edge node [scale=0.82,above] {$0.5$} (m-1-2)
                      (m-1-1) edge node [scale=0.82,sloped] {$0.5$} (m-2-2)
                      (m-2-2) edge node [scale=0.82,right] {$1.0$} (m-1-2);
    \end{tikzpicture}
    \subcaption{\small{Violation of SCF.}}
    \begin{tikzpicture}[
    scale=1, every node/.style={scale=0.77},
            auto,
            node distance = 3cm, 
        ]
        every state/.style={%
            draw = black,
            fill = white,
            minimum size = 1mm
        }
        \matrix[%
            matrix of math nodes,
            column sep = 0.7cm,
            row sep = 0.7cm,
            inner sep = 0pt,
            nodes={state}
            ] (m) {%
            X & Y \\
            W & Z \\
            };
            \path[-arcsq] (m-1-1) edge node [scale=0.82,left] {$0.5$} (m-2-1)
                      (m-2-1) edge node [scale=0.82,below] {$1.0$} (m-2-2)
                      (m-1-1) edge node [scale=0.82,above] {$0.5$} (m-1-2)
                      (m-2-2) edge node [scale=0.82,right] {$0.5$} (m-1-2);
    \end{tikzpicture}
    \subcaption{\small{Violation of UCF.}}
\end{minipage}
\vspace{-0.3em}
\caption{\small{Examples.}}
\label{fig:unfaithful_example}
\vspace{-1.9em}
\end{wrapfigure}
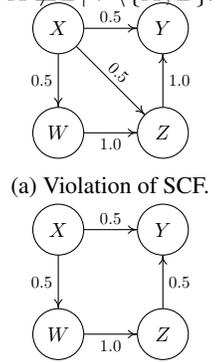
The above assumptions are restrictions of the triangle-faithfulness \citep{Zhang2008detection} and orientation-faithfulness \citep{Ramsey2006adjacency} assumptions, respectively, to collider structures. Note that they differ only in the type of collider being considered. However, these assumptions require dependence conditioning on any subset of $\V \setminus \{X, Z\}$, which may be restrictive in practice. In this work, we further relax them to require only dependence conditioning on $\V \setminus \{X, Z\}$, and will show in Section \ref{sec:inverse_covariance_estimation} how they are both necessary and sufficient conditions for estimating a sound super-structure of the true DAG. These relaxed assumptions are stated below. 
\begin{assumption}[Single shielded-collider-faithfulness (SSCF)]
\label{assump:sscf}
Given a DAG $\G$ and distribution $\Pdistr$ over the variable set $\V$, let $X\rightarrow Y \leftarrow Z$ be any shielded collider in $\G$. Then $X\notindependent Z | \V \setminus \{X, Z\}$.
\end{assumption}

\begin{assumption}[Single unshielded-collider-faithfulness (SUCF)]
\label{assump:sucf}
Given a DAG $\G$ and distribution $\Pdistr$ over the variable set $\V$, let $X\rightarrow Y \leftarrow Z$ be any unshielded collider in $\G$.
Then $X\notindependent Z | \V \setminus \{X, Z\}$.
\end{assumption}

Figure \ref{fig:unfaithful_example} illustrate the examples in which the SCF and UCF assumptions are violated, respectively, but the SSCF and SUCF assumptions hold. In these examples, we have $X \independent Z | Y$ and $X \notindependent Z | \{Y, W\}$, where the former unfaithful CI relation $X \independent Z | Y$ is constructed via path cancellations.

Given a distribution $\Pdistr$, let $\G_{fai}(\Pdistr)$, $\G_{adj}(\Pdistr)$, $\G_{ori}(\Pdistr)$, $\G_{tri}(\Pdistr)$, $\G_{scf}(\Pdistr)$, $\G_{ucf}(\Pdistr)$, $\G_{sscf}(\Pdistr)$, and $\G_{sucf}(\Pdistr)$ be the set of DAGs satisfying the faithfulness, adjacency-faithfulness, orientation-faithfulness, triangle-faithfulness, SCF, UCF, SSCF, and SUCF assumptions, respectively. Also, denote by $\subset$ the proper subset symbol. Combining with the results by \citet{Ramsey2006adjacency,Zhang2008detection}, we have the following nesting properties, showing that SSCF and SUCF are intuitively much weaker than the faithfulness assumption. 
\begin{remark}
\label{remark:sscf}
$
\G_{fai}(\Pdistr)\subset\G_{adj}(\Pdistr)\subset\G_{tri}(\Pdistr)\subset\G_{scf}(\Pdistr)\subset\G_{sscf}(\Pdistr).
$
\end{remark}
\begin{remark}
\label{remark:sucf}
$
\G_{fai}(\Pdistr)\subset\G_{ori}(\Pdistr)\subset\G_{ucf}(\Pdistr)\subset\G_{sucf}(\Pdistr).
$
\end{remark}

\subsection{Inverse Covariance Estimation for Learning Super-Structure }\label{sec:inverse_covariance_estimation}
Based on the assumptions described in Section \ref{sec:weaker_assumptions}, we develop a super-structure estimation method via inverse covariance estimation. We first study the specific assumptions required for the support of inverse covariance, denoted as $\supp(\InvCov)$, to be the same as the moralized graph of the underlying DAG. This is similar to the analysis by \citet{Loh2014high}, but we focus on formulating precisely the required assumptions from a causal discovery perspective, to shed light on how weak they are as compared to faithfulness. This enables us to further weaken the required assumptions in order to recover a super-structure of the true DAG based on the support of inverse covariance.

As described, we first have the following theorem that relates the moral graph and the support of inverse covariance to the SUCF and SSCF assumptions, with a proof given in Appendix \ref{sec:proof_sucf_sscf}.
\begin{restatable}{theorem}{ThmMoralGraph}\label{thm:moral_graph}
Given a DAG $\G$ and distribution $\Pdistr$ that follow a linear Gaussian model with inverse covariance matrix $\InvCov$, under Markov assumption, the SSCF and SUCF assumptions are satisfied if and only if the structure defined by $\supp(\InvCov)$ is the same as the moralized graph of the true DAG $\G$.
\end{restatable}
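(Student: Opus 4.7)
The plan is to prove the two directions of the equivalence separately, using as the central ingredient the fact (noted at the end of Section~\ref{sec:linear_sem}) that in the linear Gaussian case $\InvCov_{ij}=0$ iff $X_i\independent X_j\mid \V\setminus\{X_i,X_j\}$. Expanding $\InvCov=(\I-\B)\NoiseCov^{-1}(\I-\B)^{\T}$ entrywise gives, for $i\neq j$,
\[
\InvCov_{ij} \;=\; -\frac{\B_{ji}}{\sigma_i^2} \;-\; \frac{\B_{ij}}{\sigma_j^2} \;+\; \sum_{k\,:\, X_i\to X_k,\, X_j\to X_k}\frac{\B_{ik}\B_{jk}}{\sigma_k^2},
\]
where acyclicity makes at most one of the first two terms structurally present (nonzero precisely when $X_i$ and $X_j$ are adjacent in $\G$, by structural minimality), and the sum runs over common children of $X_i$ and $X_j$. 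A pair is adjacent in the moralized graph of $\G$ iff at least one of these three contributions is structurally nonzero.

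For the forward direction (Markov + SSCF + SUCF $\Rightarrow$ $\supp(\InvCov)$ equals the moralized graph), I split on whether $X_i$ and $X_j$ are adjacent in $\G$ and whether they share a common child. If neither, then $X_i$ and $X_j$ are d-separated given $\V\setminus\{X_i,X_j\}$ by the standard moral-graph characterization of d-separation given all remaining vertices, so Markov gives $\InvCov_{ij}=0$. If adjacent but sharing no common child, the one surviving direct-edge term is nonzero by structural minimality. If non-adjacent but sharing common children, the pair forms one or more unshielded colliders and SUCF asserts $X_i\notindependent X_j\mid\V\setminus\{X_i,X_j\}$, hence $\InvCov_{ij}\neq 0$. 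If both adjacent and sharing a common child, we have a shielded collider and SSCF supplies the same conclusion.

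For the reverse direction, assume $\supp(\InvCov)$ equals the moralized graph. For any shielded collider $X\to Y\leftarrow Z$, the vertices $X$ and $Z$ are adjacent in the moralized graph, so $\InvCov_{XZ}\neq 0$, which translates via the linear Gaussian equivalence to $X\notindependent Z\mid \V\setminus\{X,Z\}$; this is exactly SSCF. An identical argument applied to unshielded colliders yields SUCF.

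The main obstacle is the subcase of the forward direction in which a pair is simultaneously adjacent in $\G$ \emph{and} shares common children: here the direct-edge term can in principle cancel exactly against the collider summation, threatening to drive $\InvCov_{ij}$ to zero despite an edge in the moralized graph. SSCF is the minimal assumption ruling out precisely such path cancellations, and checking that the case analysis is tight around this point (and similarly that SUCF handles multiple unshielded colliders whose contributions could cancel) is the heart of the argument. A smaller point worth mentioning, rather than re-deriving, is the classical fact that two vertices are d-separated given all other vertices iff they are non-adjacent in the moralized graph, which I would invoke to close the ``neither'' case.
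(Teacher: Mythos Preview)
Your proposal is correct and follows essentially the same approach as the paper. Both rely on the entrywise expansion of $\InvCov=(\I-\B)\NoiseCov^{-1}(\I-\B)^{\T}$ and a case split on whether $X_i,X_j$ are adjacent in $\G$ and/or share a common child; the only organizational difference is that you argue directly in each case, whereas the paper packages the ``neither'' case as a standalone lemma (that $\supp(\InvCov)$ is always a subgraph of the moral graph) and then proves both directions by contraposition.
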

\begin{wrapfigure}{r}{0.34\textwidth}
\vspace{-0.8em}
  \begin{center}
    \includegraphics[width=0.32\textwidth]{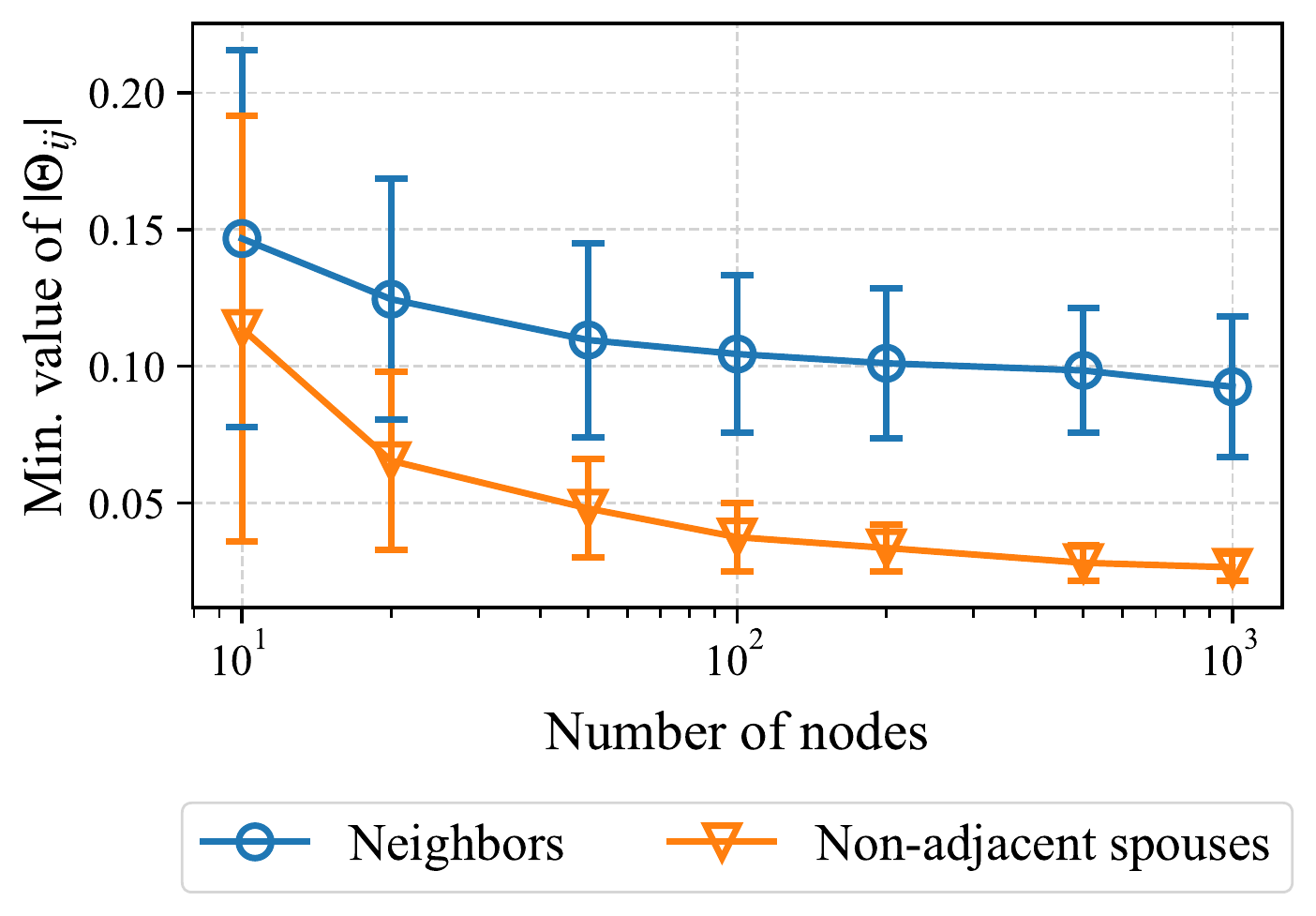}
  \end{center}
  \vspace{-0.7em}
  \caption{Expected degree of $2$.}
  \label{fig:sucf_degree_2}
\vspace{-1.5em}
\end{wrapfigure}

Although the above theorem guarantees recovering the moralized graph via $\supp(\InvCov)$, in practice the SUCF assumption may be approximately violated. To illustrate, we simulate the linear Gaussian model with edge weights sampled uniformly from $[-0.8, -0.2] \cup  [0.2, 0.8]$ and exogenous noise variances sampled uniformly from $[1, 2]$. We then compute the minimum values 
$\min_{i,j} \{|\InvCov_{ij}|: X_i \text{ and } X_j \text{ correspond to a pair of neighbors in}$
$\G\}$ and $\min_{i,j} \{|\InvCov_{ij}|: X_i \text{ and } X_j \text{ correspond to a pair of non-}$
$\text{adjacent spouses in } \G\}$ over $100$ simulations, by considering different expected degrees and number of variables $d\in\{10,20,50,100,200,500,1000\}$. Note that $X_i$ and $X_j$ being a pair of non-adjacent spouses in DAG $\G$ implies that they share a common child and have an unshielded collider; therefore, SUCF requires that $X_i \notindependent X_j|\V \setminus \{X_i, X_j\}$, i.e., $\InvCov_{ij}\neq0$ in the linear Gaussian case. The visualizations for expected degree of $2$ is shown in Figure \ref{fig:sucf_degree_2}, while those for degrees of $5$ and $8$ can be be found in Appendix \ref{sec:supp_sucf}. Although the values in both cases decrease with larger graphs,\footnote{Note that this is consistent with the analysis of the strong faithfulness assumption by \citet{Uhler2013geometry}.} the minimum values corresponding to the non-adjacent spouses are significantly smaller than those of neighbors, where the difference grows in the number of nodes. For instance, on $1000$-node graphs with degree of $2$, the average minimum value of the former is $0.025$, while that of the latter is close to $0.1$. With finite samples, it is very challenging to discover those undirected edges in the former case, since the weights are very close to zero, leading to approximate violations of SUCF especially when there is a large number of variables. Indeed, the simulations above are based on a certain range of edge weights and noise variances, whose true values are not known in practice. Therefore, it is possible that the data distribution considered falls into similar setting, leading to violation of SUCF. It is thus desirable to further weaken the required assumptions to develop a more reliable procedure.

If we drop the SUCF assumption, the structure defined by $\supp(\InvCov)$ may miss some edges as compared to the moralized graph of the true DAG. Fortunately, these edges correspond to the non-adjacent spouses in the DAG. That is, $\supp(\InvCov)$ is still guaranteed to contain all undirected edges corresponding to neighbors in the DAG, as long as SSCF (in addition to the Markov assumption) holds, indicating that it is a sound super-structure of the true DAG. Based on the simulations considered in Figure \ref{fig:sucf_degree_2}, approximate violations of SSCF are much less likely to happen than those of SUCF.
\begin{restatable}{theorem}{ThmSuperStructure}\label{thm:super_structure}
Given a DAG $\G$ and distribution $\Pdistr$ that follow a linear Gaussian model with inverse covariance matrix $\InvCov$, under Markov assumption, the SSCF assumption is satisfied if and only if the structure defined by $\supp(\InvCov)$ is a super-structure of the true DAG $\G$.
\end{restatable}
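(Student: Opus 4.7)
The plan is to reuse the algebraic identity underlying Theorem~\ref{thm:moral_graph} and then isolate precisely the part that only SSCF controls. Using $\InvCov=(\I-\B)\NoiseCov^{-1}(\I-\B)^\T$ and the diagonality of $\NoiseCov$, a direct expansion gives, for $i\ne j$,
\[
\InvCov_{ij} \;=\; -\frac{\B_{ji}}{\sigma_i^2} \;-\; \frac{\B_{ij}}{\sigma_j^2} \;+\; \sum_{k\notin\{i,j\}} \frac{\B_{ik}\B_{jk}}{\sigma_k^2}.
\]
By structural minimality and acyclicity, the first two terms are nonzero precisely when $X_i$ and $X_j$ are adjacent in $\G$ (with at most one of them nonzero), while the $k$-th term in the sum is nonzero precisely when $X_k$ is a common child of $X_i$ and $X_j$. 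Since $\InvCov_{ij}=0$ iff $X_i\independent X_j\mid\V\setminus\{X_i,X_j\}$ in the linear Gaussian case, the only way a skeleton edge $X_i-X_j$ can fail to appear in $\supp(\InvCov)$ is by exact cancellation between the direct-edge contribution and the common-child contributions.

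For the ``only if'' direction (SSCF implies super-structure), I would fix any adjacent pair $X_i,X_j$ in $\G$, WLOG $X_i\rightarrow X_j$, so $\B_{ij}\ne 0$. If $X_i$ and $X_j$ share no common child, the decomposition above collapses to the single nonzero term $-\B_{ij}/\sigma_j^2$, and $\InvCov_{ij}\ne 0$. If they share at least one common child $W$, then $X_i\rightarrow W\leftarrow X_j$ is a shielded collider (adjacency of $X_i,X_j$ is given by assumption), and SSCF directly yields $X_i\notindependent X_j\mid\V\setminus\{X_i,X_j\}$, hence $\InvCov_{ij}\ne 0$. Either way the undirected edge $X_i-X_j$ belongs to $\supp(\InvCov)$, so the skeleton of $\G$ is contained in the structure defined by $\supp(\InvCov)$.

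For the ``if'' direction I would argue the contrapositive: suppose SSCF fails, so there is a shielded collider $X\rightarrow Y\leftarrow Z$ in $\G$ with $X\independent Z\mid\V\setminus\{X,Z\}$. Being shielded means $X$ and $Z$ are adjacent in $\G$, so the undirected edge $X-Z$ lies in the skeleton of $\G$. But the conditional independence forces $\InvCov_{XZ}=0$, so this skeleton edge is absent from $\supp(\InvCov)$, contradicting the super-structure property.

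The main subtlety is the second subcase of the forward direction: without SSCF, one cannot preclude the direct-edge contribution from cancelling exactly against the common-child contributions, which is precisely the path-cancellation scenario depicted in Figure~\ref{fig:unfaithful_example}. SSCF is essentially the minimal assumption that rules out this cancellation for adjacent pairs. Unlike the proof of Theorem~\ref{thm:moral_graph}, no appeal to SUCF is needed, because here we only need to recover the skeleton of $\G$, not the moralization edges between non-adjacent spouses; unshielded-collider cancellations are harmless in this weaker statement since they can only erase spouse edges from $\supp(\InvCov)$, not skeleton edges.
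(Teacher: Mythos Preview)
Your proposal is correct and follows essentially the same approach as the paper: both rely on the expansion of $\InvCov_{ij}$ (the paper's Lemma~\ref{lemma:theta_entries}), the equivalence $\InvCov_{ij}=0 \Leftrightarrow X_i\independent X_j\mid \V\setminus\{X_i,X_j\}$, and the observation that a nonzero direct-edge term can only be cancelled by common-child terms, which is exactly what SSCF forbids for adjacent pairs. The only cosmetic difference is that for the direction ``SSCF $\Rightarrow$ super-structure'' you argue directly via a case split (no common child vs.\ at least one common child), whereas the paper argues the contrapositive and uses the algebra to locate a common child when $\InvCov_{jk}=0$; the content is the same.
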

\vspace{-0.1em}
The proof is provided in Appendix \ref{sec:proof_sucf_sscf}. With an asymptotically correct approach to estimate $\supp(\InvCov)$, the theorem implies that one can estimate a sound super-structure of the true DAG under the SSCF assumption that is strictly weaker than triangle-faithfulness (cf. Remark \ref{remark:sscf}), an assumption which is intuitively much weaker than faithfulness. This may be considered as a significant improvement over the existing methods that estimate the skeleton of the true DAG for use as a super-structure, utilizing methods like MMPC \citep{Tsamardinos2006maxmin}, which require faithfulness to be asymptotically correct.

The question remains as how to estimate $\supp(\InvCov)$. In the high-dimensional setting, the $\ell_1$-penalized maximum likelihood estimator can be used to recover the support, with consistency result provided by \citet{Ravikumar2008model,Ravikumar2011high}. In this work, we adopt the widely used graphical Lasso (GLasso) \citep{Friedman2008sparse} method based on the block coordinate descent algorithm for estimating $\supp(\InvCov)$. Other efficient estimation method, e.g., QUIC \citep{Hsieh2014quic}, can also be adopted, which is treated as a future work.

We provide empirical studies in Appendix \ref{sec:supp_experiment_violation_faithfulness}, in which faithfulness is \emph{exactly} violated but not SSCF, to demonstrate that GLasso finds the true super-structure in these cases, whereas MMPC fails. This is consistent with the experiments in Section \ref{sec:experiment_super_graph_estimation} which show that GLasso is more reliable in practice, since \emph{approximate} violations of SSCF are less likely to happen as compared to those of faithfulness.

\vspace{-0.2em}
\section{Improved Exact Search}\label{sec:improved_exact_search}
\vspace{-0.1em}
We introduce several strategies to improve the scalability of exact score-based search. In Section \ref{sec:exact_search_smr}, we first establish the connection between the SMR assumption and exact search with BIC. We then leverage the super-structure estimation method described in Section \ref{sec:inverse_covariance_estimation} to restrict the search space of exact methods in Section \ref{sec:exact_search_super_structure}, and develop a local search strategy in Section \ref{sec:local_search}.

\subsection{Exact Search and the SMR Assumption}\label{sec:exact_search_smr}
As described in Section \ref{sec:causal_assumptions}, classical methods such as PC and GES may produce sub-optimal solutions when faithfulness fails. In the following, we show the connection between the SMR assumption, which is strictly weaker than faithfulness, and the asymptotic correctness of exact search with BIC. To the best of our knowledge, this is the first time that this result has been formally established.
\begin{restatable}{theorem}{ThmExactSearch}\label{thm:exact_search}
Exact score-based search with BIC asymptotically outputs a DAG that belongs to the MEC of the true DAG $\G$ if and only if the DAG $\G$ and distribution $\Pdistr$ satisfy the SMR assumption.
\end{restatable}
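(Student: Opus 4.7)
The plan is to prove both directions by first establishing a deterministic asymptotic characterization of the BIC-maximizer, and then comparing it to the SMR condition.

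The core step is to show that, as the sample size $n\to\infty$, exact score-based search with BIC selects (with probability tending to one) a DAG from a \emph{sparsest Markov} MEC, i.e., an MEC that (i) satisfies the Markov assumption with $\Pdistr$ and (ii) has the minimum number of edges among all such MECs. I would assemble this from two standard ingredients. First, BIC is score-equivalent, so it assigns the same value to every DAG in an MEC, and I can reason at the level of MECs. Second, in the linear Gaussian setting the maximized log-likelihood of a DAG $\G'$ converges (after normalization by $n$) to the negative KL divergence from $\Pdistr$ to the best-fitting Gaussian that factorizes according to $\G'$; this divergence is zero precisely when $\G'$ is Markov w.r.t.\ $\Pdistr$, and strictly positive otherwise. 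Thus the likelihood gap between a non-Markov DAG and any Markov DAG is $\Theta(n)$, whereas the BIC edge penalty is only $\Theta(\log n)$, so non-Markov DAGs are eliminated asymptotically. Among Markov DAGs the likelihood terms agree in the limit, so only the edge-count penalty matters, and BIC strictly prefers a DAG with fewer edges. This is just the familiar consistency argument of Haughton/Chickering specialized to Gaussian DAG models; I would cite those results rather than redo the curved-exponential-family calculation.

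Given this characterization, the forward direction ($\Leftarrow$) is immediate: if $(\G,\Pdistr)$ satisfies SMR, then the MEC of $\G$ is the \emph{unique} sparsest Markov MEC, so BIC asymptotically selects a DAG inside it. For the reverse direction ($\Rightarrow$) I would argue by contraposition: assume SMR fails. Then either (a) some MEC $\mathcal{M}'\neq\mathrm{MEC}(\G)$ is Markov with strictly fewer edges, in which case the asymptotic characterization forces BIC to prefer $\mathcal{M}'$ over $\mathrm{MEC}(\G)$, so the output cannot be guaranteed to lie in $\mathrm{MEC}(\G)$; or (b) some MEC $\mathcal{M}'\neq\mathrm{MEC}(\G)$ is Markov and ties $\mathrm{MEC}(\G)$ for the minimum number of edges, in which case the exact search has no way to distinguish them in the limit and its output is again not guaranteed to lie in $\mathrm{MEC}(\G)$. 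Either way, the stated asymptotic correctness fails.

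The main obstacle is the tie-breaking subtlety in case (b) of the contrapositive: strictly speaking, BIC does not \emph{output} a DAG in $\mathcal{M}'$ rather than in $\mathrm{MEC}(\G)$ in that case; it merely fails to prefer $\mathrm{MEC}(\G)$. To turn this into a clean ``if and only if'' statement I need to read ``asymptotically outputs a DAG in $\mathrm{MEC}(\G)$'' as a guarantee that holds regardless of how the solver breaks ties among optimal scores; under that reading, non-uniqueness of the sparsest Markov MEC already violates the conclusion, and the contraposition closes. A second, more technical concern is ensuring the Gaussian BIC consistency holds under the paper's running assumptions (positive noise variances, structural minimality, causal sufficiency, centered data); these give the needed non-degeneracy of the information matrix and strict positivity of the KL gap for non-Markov DAGs, so the standard consistency proof goes through with no further hypotheses.
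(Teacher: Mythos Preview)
Your proposal is correct and follows essentially the same approach as the paper: both rely on BIC consistency (Haughton/Chickering) to argue that asymptotically only Markov DAGs survive and, among those, the penalty term favors fewer edges, and both handle the contrapositive direction by the same case split into strictly fewer edges versus a tie. The only cosmetic difference is that you first package this as an intermediate ``BIC selects a sparsest Markov MEC'' characterization and then read off both directions, whereas the paper applies the consistency argument separately in each contrapositive; your explicit discussion of the tie-breaking reading in case~(b) is also exactly how the paper treats the $|\mathcal{H}|=|\G|$ case.
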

The proof can be found in Appendix \ref{sec:proof_exact_search}, which is straightforward from the consistency of BIC \citep{Haughton1988choice,Chickering2002optimal}. Under the SMR assumption, the DAG estimated by exact search with BIC, is not necessarily identical to, but belongs to the same MEC as, the true DAG $\G$. Therefore, one has to convert the estimated DAG to its MEC for causal interpretation \citep{Spirtes2018search}.

\subsection{A* with Super-Structure}\label{sec:exact_search_super_structure}
Although exact search methods rely on the SMR assumption that is strictly weaker than faithfulness, it is challenging to scale them up to large graphs as the task is NP-hard \citep{Chickering1996learning}. To remedy this issue, similar to \citep{Perrier2008finding}, we propose to explicitly constrain the search space of exact score-based search algorithms using super-structure. This is achieved by limiting the size of parent graphs in the search procedure.

To give a brief overview, for exact score-based methods such as DP \citep{Singh2005finding,Silander2006simple} and A* \citep{Yuan2011learning,Yuan2013learning}, \emph{parent graph} plays an important role in constructing the search space, which is a data structure that stores the costs for the arcs of the order graph. These methods typically construct a parent graph for each variable, which is used to build the \emph{order graph}. The search is performed on the order graph \citep{Singh2005finding,Yuan2013learning}, by solving a shortest path problem from the root node, which corresponds to the empty set, to the leaf node, which corresponds to the complete set of variables. Each variable has its own parent graph, and all optimal parent sets are selected from the candidate parent sets, by selecting the ones with the best score in the corresponding parent graph. Here we omit the details of the parent graph and order graph, and refer the interested reader to the references above for their complete definitions and procedure.

Using the super-structure estimation method in Section \ref{sec:inverse_covariance_estimation}, for each variable, we are able to obtain the set of nodes that must not be its candidate parents, and directly remove the entries involving these nodes in the corresponding parent graph. These constrained parent graphs help reduce search space in the order graph, from which the shortest path problem is formulated, and thereby improve the efficiency of the exact search procedure. As a byproduct, this strategy also reduces the memory cost, since one does not have to enumerate all candidate parent sets when constructing the parent graphs.

Note that DP and A* differ mainly in the search phase of the order graph. That is, DP has to consider all combinations of nodes (i.e., $2^d$ combinations), and therefore is exponentially expensive in computation and quickly becomes infeasible when the graph size increases, while A* uses a heuristic function to only expand the most promising node \citep{Yuan2013learning}, and has been shown empirically to achieve better efficiency. Therefore, in this work we adopt the A* algorithm with the incorporated super-structure to further constrain the search space, which we call \emph{A*-SS}, although similar strategy also works for DP. Under the Markov and SSCF assumptions, our estimated super-structure via inverse covariance estimation is guaranteed to asymptotically contain the skeleton of the ground-truth DAG (cf. Theorem \ref{thm:super_structure}); thus, the proposed constrained search will not suffer from the trade-off between scalability and reliability given a sufficient number of samples.

Besides using super-structure, we apply several existing strategies to improve the efficiency and memory cost of A*. In particular, we use the sparse representations \citep{Yuan2013learning} of the parent graphs to further remove unnecessary entries, which have been shown to improve the efficiency and substantially reduce the memory cost. We also adopt the optimal path extension \citep{Karan2016optimal} and dynamic k-cycle conflict heuristics \citep{Yuan2013learning} to reduce search space during the A* search. We refer the interested reader to the aforementioned references for detailed explanations of these strategies.

\subsection{Local A*}\label{sec:local_search}
Even with super-structure and the other techniques described in Section \ref{sec:exact_search_super_structure}, scaling up exact score-based search remains a challenge because of the large search space. Given the popularity of distributed computation, the idea of divide-and-conquer has been explored for scaling up Bayesian network structure learning \citep{gu2020learning,zhang2020learning,kojima2010optimal,xie2008recursive}. However, most of them involve conditional independence tests, which therefore rely on the faithfulness assumption in some way.

To avoid relying on the faithfulness assumption, we aim to develop a local search strategy based on exact search that relies on the SMR assumption. \citet{Lu2021improving} proposed an approximation algorithm, called Triplet A*, to do so. In particular, they first apply estimation method like MMPC \citep{Tsamardinos2006maxmin} to obtain a skeleton. For each variable $X$ and each pair of its neighbors $Y$ and $Z$, the algorithm constructs a cluster that consists of variables $X,Y,Z$ and their neighbors indicated by the skeleton, and runs exact score-based search such as A* on each of these clusters independently. Finally, the final structure is obtained by combining the search results from all clusters. A possible drawback is that the skeleton estimation method like MMPC  usually requires the faithfulness assumption, although Triplet A* has been shown to be able to recover some of the missing edges caused by the unfaithful CI relations.

Without faithfulness, fortunately, one is still able to obtain a super-structure of the underlying DAG using our procedure under a strictly weaker assumption. In particular, the proposed super-structure estimation method involving inverse covariance estimation described in Section \ref{sec:inverse_covariance_estimation} requires only the SSCF assumption that is intuitively much weaker than faithfulness. Therefore, our goal is to develop a local search strategy based on this super-structure estimation method.

Following the similar strategy in \citep{Lu2021improving}, our main idea is that, for any variable $X$, its parents, children, spouses, and grandparents contain sufficient information for exact score-based search (with SMR assumption) to correctly discover the undirected edges and v-structures involving $X$ (if there is any). This is because this variable set, denoted as $\V_X$, includes all variables that are direct common causes of $X$ and any of its direct neighbors. The question is then how to correctly identify a set that contains those variables in $\V_X$. A naive approach is to apply skeleton estimation methods like MMPC \citep{Tsamardinos2006maxmin} to estimate a skeleton of the true DAG $\G$. Clearly, for each variable $X$, its neighbors within two hops in the skeleton are guaranteed to contain its parents, children, spouses, and grandparents. However, MMPC requires the faithfulness assumption to be asymptotically correct, which may be restrictive in practice. Fortunately, based on Theorem \ref{thm:super_structure}, under the Markov and SSCF assumptions, the structure defined by $\supp(\InvCov)$, denoted as $\G(\supp(\InvCov))$, is asymptotically a super-structure of the true DAG. It straightforwardly follows that the neighbors of variable $X$ within two hops in $\G(\supp(\InvCov))$ must contain its parents, children, spouses, and grandparents. These variables, together with $X$ itself, then contain sufficient information for exact score-based search to correctly discover the undirected edges and v-structures involving $X$. The empirical studies in Section \ref{sec:experiment_validation} suggest that the proposed local search procedure is asymptotically correct as it returns the same solutions as the A*-SS method.

As described in Section \ref{sec:inverse_covariance_estimation}, in practice, one can use the GLasso method to produce an estimate of the inverse covariance matrix $\InvCovhat$, and obtain its defined structure, denoted as $\G(\supp(\InvCovhat))$. Then, we construct a \emph{local cluster} for each variable, which consists of the variable itself and its neighbors within two hops in the structure $\G(\supp(\InvCovhat))$, and apply an exact score-based method such as A* on these local clusters independently. We then obtain the final structure by combining these local results.

Our approach can be easily parallelized by running the local searches for all variables in \emph{parallel}, in which the complexity depends on the maximum size of local clusters. In practice, the computational resources may be limited and one is not able to do so. Furthermore, there is a huge overlap between different local clusters, leading to the redundancy of computation during the search procedure. To alleviate this, we consider an \emph{iterative} approach starting from the smallest local cluster to the largest one that keeps some edges fixed to take full advantage of the information obtained from the previous searches. More specifically, in each iteration, we save the discovered undirected edges and v-structures involving the target variable from the local search. Then in the next iteration for, e.g., variable $X$, we look up the undirected edges and v-structures involving $X$ that have been discovered and saved from previous iterations, and keep these edges fixed in the local search for variable $X$. By trusting the information conveyed by these edges and keeping them fixed, the search space for the following clusters can be drastically constrained. This improved procedure, called \emph{Local A*}, is illustrated in Algorithm \ref{alg:local_search}. Note that one could run some of the local searches in parallel to accelerate the iterative procedure, depending on the available computational resources.

It may be challenging to derive the exact computational complexity of the proposed method, as is the case for the A* algorithm, partly owing to the heuristic involved \citep[Section~4]{Hart1968formal}. This is because the complexity is affected by different factors such as the size of sparse parent graphs and the heuristic function (i.e., dynamic k-cycle conflict heuristics) \citep{Yuan2013learning}. Thus, we are only able to provide the running time as a proxy of the computational complexity in Section \ref{sec:experiment_large_graph}.

\begin{figure}[t]
\begin{algorithm}[H]
\caption{Local A*}
\label{alg:local_search}
\begin{algorithmic}[1]
    \Require Structure defined by the support of estimated inverse covariance matrix $\G(\supp(\InvCovhat))$.
    \State Initialize $\Mhat$ as a $d \times d$ matrix in which all entries are zero.
    \State Let $N(i)$ denote the set of neighbors of variable $X_i$ in the structure $\G(\supp(\InvCovhat))$.
    \State Obtain a variable ordering $\pi$ by sorting the variables $X_i, i=1,\dots,d$ based on the cardinality of their corresponding local cluster $C_i = \{X_i\} \cup N(i) \cup \left(\bigcup_{j\in N(i)} N(j)\right)$. Specifically, $\pi(j)$ refers to the index such that $C_{\pi(j)}$ is the $j$-th smallest local cluster.
    \For{$i=\pi(1),\pi(2),\dots, \pi(d)$}
        \State Obtain the local cluster $C_i$.
        \State From $\Mhat$, obtain the previously discovered undirected edges and v-structures involving $X_i$.
        \State Orient the undirected edges such that they do not create any cycle or additional v-structure.
        \State With these edges fixed, run A* (or A*-SS) on cluster $C_i$.
        \State Convert the DAG estimated by A* to its MEC, and save the newly discovered undirected\\
        \hspace{2.5em} edges and v-structures involving variable $X_i$ to $\Mhat$.
    \EndFor
    \State Output the matrix $\Mhat$ that represents the final MEC.
\end{algorithmic}
\end{algorithm}
\vspace{-2em}
\end{figure}
\section{Experiments}\label{sec:experiment}
\vspace{-0.3em}
We first conduct experiments to compare the efficacy of GLasso and MMPC for estimating super-structures of the true DAGs. We then validate the proposed search strategies, and compare them to different baselines. Lastly, we compare different approaches to scale up A* search, including the proposed ones. The baselines include Triplet A* \citep{Lu2021improving}, PC \citep{Spirtes1991pc,Ramsey2006adjacency}, FGES \citep{Ramsey2017million}, MMHC \citep{Tsamardinos2006maxmin}, and SP \citep{Raskutti2018learning}. The implementation details of our method and the baselines are given in Appendix \ref{sec:implementation_details}.

In our experiments, the ground truth DAGs are simulated using the Erd\"{o}s--R\'{e}nyi model \citep{Erdos1959random} with different degrees and number of variables. We construct the weighted adjacency matrix of each DAG using edge weights sampled uniformly from $[-0.8, -0.2] \cup  [0.2, 0.8]$. Based on the weighted matrix constructed, we simulate $n\in\{300, 10000\}$ samples using the linear Gaussian model with exogenous noise variances sampled uniformly from $[1, 2]$. We report the structural Hamming distance (SHD) over the complete partial DAGs (CPDAGs). We also compute the F1 score of the undirected and directed edges in the estimated CPDAGs. We do not provide the complete results for structural intervention distance (SID) \citep{Peters2013structural}, and include only a summary in Appendix \ref{sec:supp_experiment_sid}. Unless otherwise stated, we report the results and standard errors computed over $10$ random simulations.

\vspace{-0.1em}
\subsection{Different Super-Structure Estimation Methods}\label{sec:experiment_super_graph_estimation}
\vspace{-0.1em}
To demonstrate the efficacy of GLasso for estimating super-structures with weaker assumptions, we compare the quality of super-structures estimated by GLasso to those by MMPC. We evaluate their ability for discovering the \emph{direct neighbors} in the ground truth, by computing the true positive rates (TPRs) and false discovery rates (FDRs) of estimating the true neighbors. Although the output of GLasso may contain non-adjacent spouses, here we only evaluate its discovered direct neighbors as only those are important for further exact search. We consider $10$-node graphs with varying sample sizes and expected degrees, and report the results of MMPC with different significance levels $\alpha$.

Due to space limit, the results for sample size $n=300$ are shown in Figure \ref{fig:super_graph_estimation_300_samples}, while those for $n=10000$ are reported in Figure \ref{fig:super_graph_estimation_10000_samples} in Appendix \ref{sec:supp_experiment_super_graph_estimation}. In the first panel of Figure \ref{fig:super_graph_estimation_300_samples}, one observes that GLasso achieves TPRs close to one across all cases, indicating that it rarely misses any direct neighbor, as compared to MMPC with lower TPRs. Notice also that the difference grows as the degree increases. Although GLasso has slightly higher FDRs than MMPC, we believe that the higher TPRs justify the cost of having a larger search space for exact search. It is interesting to observe that even with a high significance level $\alpha$ such as $0.5$ (i.e., the hypothesis tests used by MMPC tend to produce more edges), MMPC has low TPRs on denser graphs. For instance, its TPR is close to $0.7$ on graphs with degree of $5$. Similar observations can be made for the case of $n=10000$. These may be ascribed to the approximate violations of faithfulness in the simulated data, and therefore MMPC may miss some neighbors because of unfaithful conditional independencies. On the other hand, GLasso requires only the SSCF assumption, which is intuitively much weaker than faithfulness.

We also consider the use of the structures estimated by GLasso and MMPC as super-structures for A*-SS, to study how they affect the exact search procedure. The third panels of Figures \ref{fig:super_graph_estimation_300_samples} and \ref{fig:super_graph_estimation_10000_samples} show that the super-structures estimated by GLasso lead to better search results than MMPC, especially on graphs with high expected degrees. This is because the performance of recovering the true CPDAGs is upper-bounded by the proportion of direct neighbors discovered by the super-structure.

\begin{figure}
  \centering
  \includegraphics[width=0.82\textwidth]{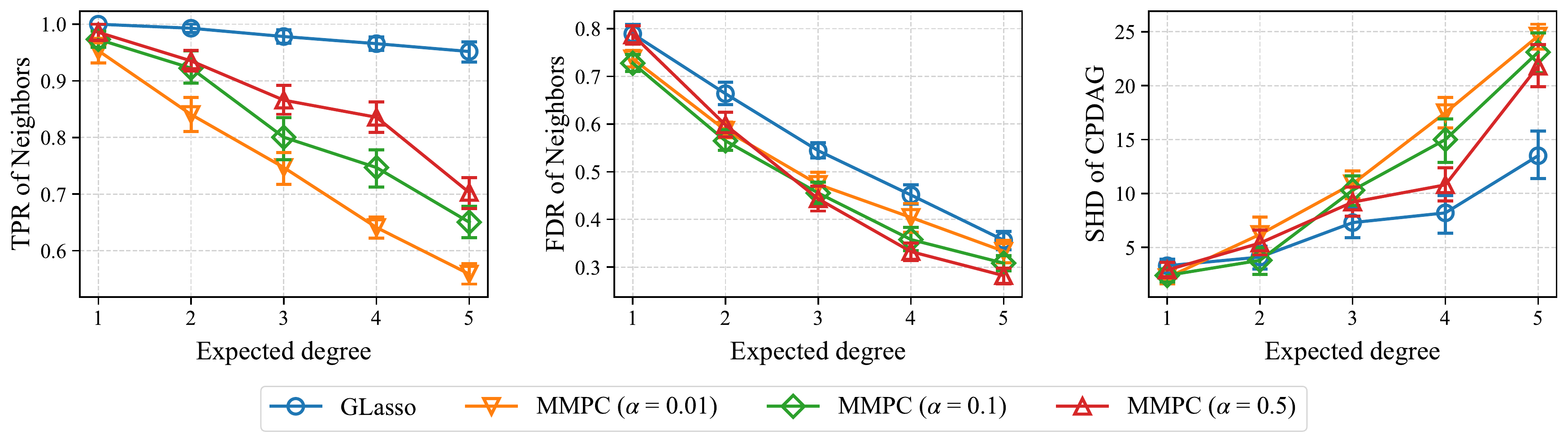}
  \caption{Results of different super-structure estimation methods on $10$-node graphs with different degrees. The sample size is $n=300$. Lower is better, except for TPR.}
  \label{fig:super_graph_estimation_300_samples}
  \vspace{-0.3em}
\end{figure}

\subsection{Validation of the Proposed Search Strategies}\label{sec:experiment_validation}
We now conduct experiments to study the asymptotic correctness of the proposed strategies, by comparing the A*-SS and Local A* methods to A*. For A*-SS and Local A*, we assume that the support of the true inverse covariance matrix is known to ensure that the exact search does not make errors just because of an inaccurate estimate of the inverse covariance. In Sections \ref{sec:experiment_small_graph} and \ref{sec:experiment_large_graph}, we consider the more realistic setting in which the ground truth $\supp(\InvCov)$ is not known. Here we simulate $n=10000$ samples for the graphs with expected degree of $2$ and $\{5,10,15,20\}$ nodes. 

The results are reported in Figure \ref{fig:validate_methods_1000_samples}, which show that the performance of Local A* and A*-SS is consistent across all metrics, including the F1 score of undirected and directed edges. This indicates that Local A* is able to output the right MECs that are the same as A*-SS in most cases, which suggests that Local A* is asymptotically correct. It is not surprising that these two methods perform better than A* for graphs with $15$ and $20$ nodes, because we assume here that the true $\supp(\InvCov)$ is known, which greatly reduces the search space and therefore is less susceptible to statistical errors owing to finite samples. This implies that a sound super-structure not only improves the efficiency, but also the performance. On the other hand, although A* is guaranteed to find the global optimum, it may still miss or incorrectly estimate some edges because of statistical errors.

\begin{figure}
  \centering
  \includegraphics[width=0.82\textwidth]{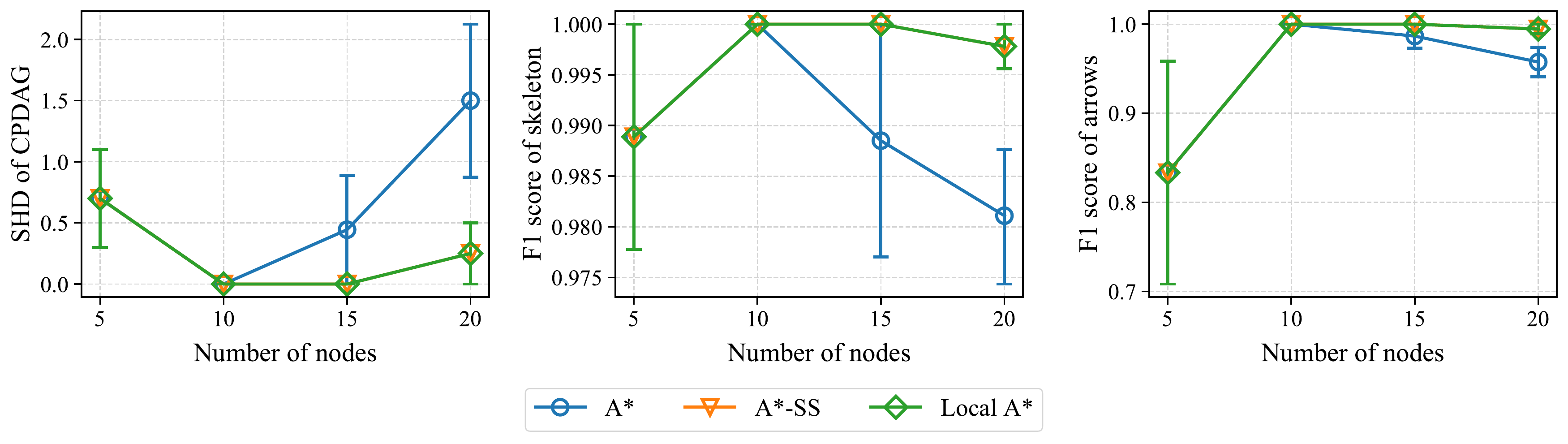}
  \caption{Validation of the proposed methods on graphs with expected degree of $2$. The sample size is $n=10000$. For A*-SS and Local A*, it is assumed here that the ground truth $\supp(\InvCov)$ is known. Higher is better, except for SHD.}
  \label{fig:validate_methods_1000_samples}
  \vspace{-0.5em}
\end{figure}

\subsection{Comparison with Other Baselines}\label{sec:experiment_small_graph}
To compare the proposed methods, i.e., A*-SS and Local A*, with the baselines, we consider graphs with varying sample sizes $n\in\{300, 10000\}$ and expected degrees from $\{1, 2, 3, 4, 5\}$. The baselines include Triplet A*, MMHC, PC, FGES, and SP. We start with $7$-node graphs since the computation of SP may be too slow on graphs larger than that.

Due to space limit, we report the results for $300$ and $10000$ samples in Figures \ref{fig:small_graph_300_samples} and \ref{fig:small_graph_10000_samples}, respectively, in Appendix \ref{sec:supp_experiment_small_graph}. With $n=300$ samples, A*-SS and Local A* outperform other baselines across all metrics in most settings. Consistent with the observation in Section \ref{sec:experiment_validation}, the results of them are exactly the same in all settings, which indicates that, similar to A*-SS, Local A* appears to output the globally optimal solutions. With a larger sample size $n=10000$, the advantage of A*-SS and Local A* slightly diminishes as their performance is similar to the other baselines. A possible reason is that approximate violations of faithfulness are less likely to occur with more samples; thus, methods like PC and FGES are able to find the right solutions. This also demonstrates that exact methods like A*-SS and Local A* that rely on the SMR assumption are relatively reliable in practice since they are less susceptible to approximate violations of faithfulness. Notice that SP performs the best for $10000$ samples; this is not surprising as it is guaranteed to find the globally optimal solutions. For A*-SS and Local A* that rely on super-structure, their performance may be upper-bounded by the proportion of direct neighbors discovered by the super-structure estimation method (see Section \ref{sec:experiment_super_graph_estimation}). However, SP does not scale well to large graphs and can handle at most $8$ nodes due to the huge search space.

\subsection{Scaling up A* to Large Graphs}\label{sec:experiment_large_graph}
\begin{figure}
  \centering
  \includegraphics[width=0.97\textwidth]{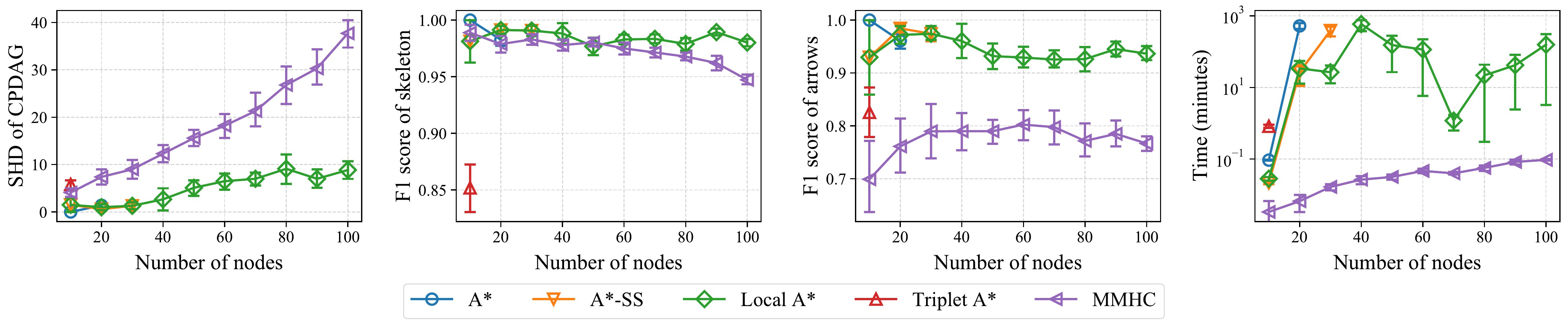}
  \caption{Results and time for graphs with different sizes. The sample size is $n=10000$. Lower is better, except for F1 score.}
  \label{fig:large_graph_10000_samples}
  \vspace{-0.5em}
\end{figure}

We have proposed several strategies to scale up exact methods like A*, such as local search and constraining the parent graphs based on super-structure. To study the contribution of each strategy, we compare different variants of A*, including A*, A*-SS, and Local A*. Here we adopt Triplet A* and MMHC as our baselines since they have been shown to have similar performance to the other baselines in Section \ref{sec:experiment_small_graph}. Furthermore, they are the most relevant to our proposed Local A* method as they are also two-stage hybrid methods, i.e., they first estimate a super-structure (or skeleton), and then perform score-based search on it. We conduct experiments with different graph sizes up to $300$ nodes, and terminate the experiments that run for more than four days.

For better readability, the results and time for graphs with $100$ nodes or less are depicted in Figure \ref{fig:large_graph_10000_samples}, and those for larger graphs are provided in Appendix \ref{sec:supp_experiment_large_graph}. One observes that Local A* and MMHC significantly outperform the others in terms of scalability. Both of them can scale up to $300$ nodes. Note that A*-SS scales up to $30$ nodes, while A* can only be scaled up to $20$ nodes in our experiments. This verifies that incorporating super-structure with A* indeed helps reduces search space. Moreover, the search time for Local A* increases gently in the number of nodes, which demonstrates the potential of its scalability for large graphs. Note that in some of the experiments, Local A* did not finish within four days and were terminated. As described in Section \ref{sec:local_search}, we observe that its running time depends on the maximum size of local clusters. If some of the local clusters contain many variables, the running time may be much longer. On the other hand, MMHC has a shorter running time than Local A*, because it adopts a hill-climbing strategy and is known to run relatively fast, similar to PC and FGES that generally finish within a few minutes in our experiments. Nevertheless, on large graphs, Local A* has much better structure recovery results than MMHC.
\section{Conclusion}\label{sec:conclusion}
We studied the problem of reliable causal discovery with assumptions weaker than faithfulness. Specifically, in our proposed procedure, we adopted exact search that requires the SMR assumption, and relaxed the faithfulness assumption as much as is viable. We developed (1) a sound super-structure estimation method based on the SSCF assumption that is intuitively much weaker than faithfulness, (2) an improved exact score-based method with the constraint of the estimated super-structure, and (3) a local search strategy that improves the scalability of exact search. The efficacy of the proposed method has been validated in our experiments conducted across various settings. It is worth noting that our procedure, at least in its current form, works only in the linear Gaussian setting. Therefore, a future direction is to extend it to the non-Gaussian and discrete cases.
\section*{Acknowledgments}
The authors would like to thank the anonymous reviewers for helpful comments and suggestions. This work was supported in part by the National Institutes of Health (NIH) under Contract R01HL159805, by the NSF-Convergence Accelerator Track-D award \#2134901, by the United States Air Force under Contract No. FA8650-17-C7715, and by a grant from Apple. The NIH or NSF is not responsible for the views reported in this article. JZ's research was supported in part by the RGC of Hong Kong under GRF 13602720.

{
\bibliographystyle{abbrvnat}
\bibliography{ms}}

\newpage
\begin{appendices}
\section{Further Common Assumptions for Causal Discovery}\label{sec:supp_causal_assumptions}
Following from Section \ref{sec:causal_assumptions}, we review several relaxations of faithfulness, which give rise to different constraint-based causal discovery methods that allow certain type of unfaithfulness.

\begin{assumption}[Adjacency-faithfulness \citep{Ramsey2006adjacency}]
\label{assump:adjacency_faith}
Given a DAG $\G$ and distribution $\Pdistr$ over the variable set $\V$, if two variables $X$ and $Y$ are adjacent in $\G$, then they are dependent conditional on any subset of $\V \setminus \{X, Z\}$.
\end{assumption}

\begin{assumption}[Orientation-faithfulness \citep{Ramsey2006adjacency}]
\label{assump:orientation_faith}
Given a DAG $\G$ and distribution $\Pdistr$ over the variable set $\V$, let <$X, Y, Z$> be any unshielded triple in $\G$.

(1) If $X\rightarrow Y \leftarrow Z$, then $X$ and $Z$ are dependent conditional on any subset of $\V \setminus \{X, Z\}$ that contains $Y$;

(2) otherwise, $X$ and $Z$ are dependent conditional on any subset of $\V \setminus \{X, Z\}$ that does not contain $Y$.
\end{assumption}

Under the Markov and adjacency-faithfulness assumptions, any violation of orientation-faithfulness is \emph{detectable} in the sense that the true distribution $\Pdistr$ is not faithful to any DAG \citep{Ramsey2006adjacency,Zhang2008detection}. This presents a concrete method to detect unfaithfulness, leading to a variation of the PC method known as Conversative PC \citep{Ramsey2006adjacency} that avoids making definite claim of the causal structure when violation of orientation-faithfulness is detected. Along this line, a further restriction of the adjacency-faithfulness assumption has been formulated and also adopted by the Very Conservative SGS algorithm \citep{Spirtes2014uniformly} to relax the type of faithfulness assumption required.

\begin{assumption}[Triangle-faithfulness \citep{Zhang2008detection}]
\label{assump:triangle_faith}
Given a DAG $\G$ and distribution $\Pdistr$ over the variable set $\V$, let <$X, Y, Z$> be any three variables that form a triangle in $\G$ (i.e., they are adjacent to one another).

(1) If $Y$ is a non-collider on the path <$X, Y, Z$>, then $X$ and $Z$ are dependent conditional on any subset of $\V \setminus \{X, Z\}$ that does not contain $Y$.

(2) If $Y$ is a collider on the path <$X, Y, Z$>, then $X$ and $Z$ are dependent conditional on any subset of $\V \setminus \{X, Z\}$ that contains $Y$.
\end{assumption}

\section{Analysis of the SUCF Assumption}\label{sec:supp_sucf}
We provide further simulation results for the analysis of the SUCF assumption in Section \ref{sec:inverse_covariance_estimation}.
\begin{figure}[!h]
    \centering
    \begin{subfigure}[b]{0.42\textwidth}
        \centering
         \includegraphics[width=\textwidth]{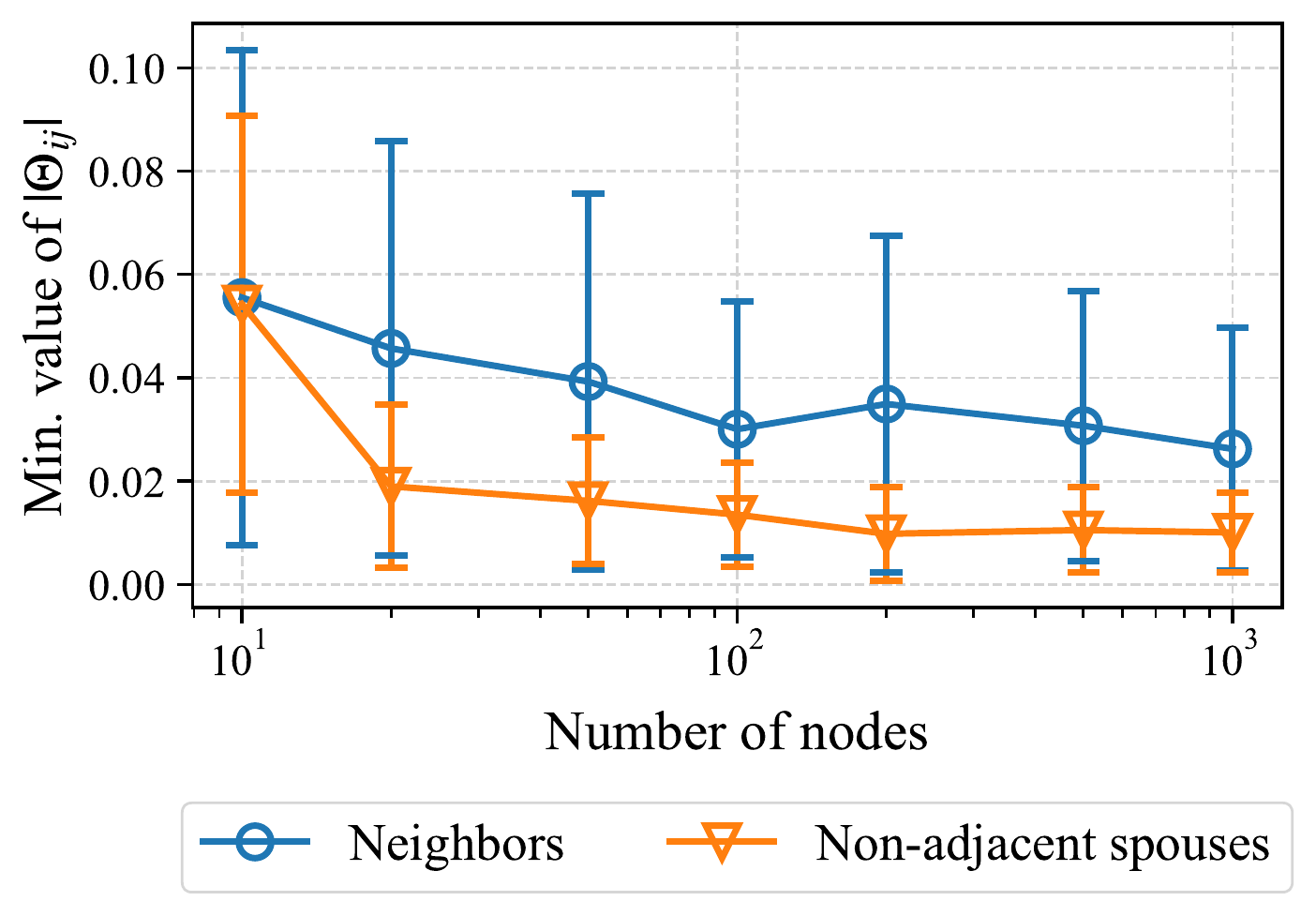}
         \caption{Expected degree of $5$.}
         \label{fig:sucf_degree_5}
     \end{subfigure}
     \hspace{2.0em}
     \begin{subfigure}[b]{0.42\textwidth}
         \centering
         \includegraphics[width=\textwidth]{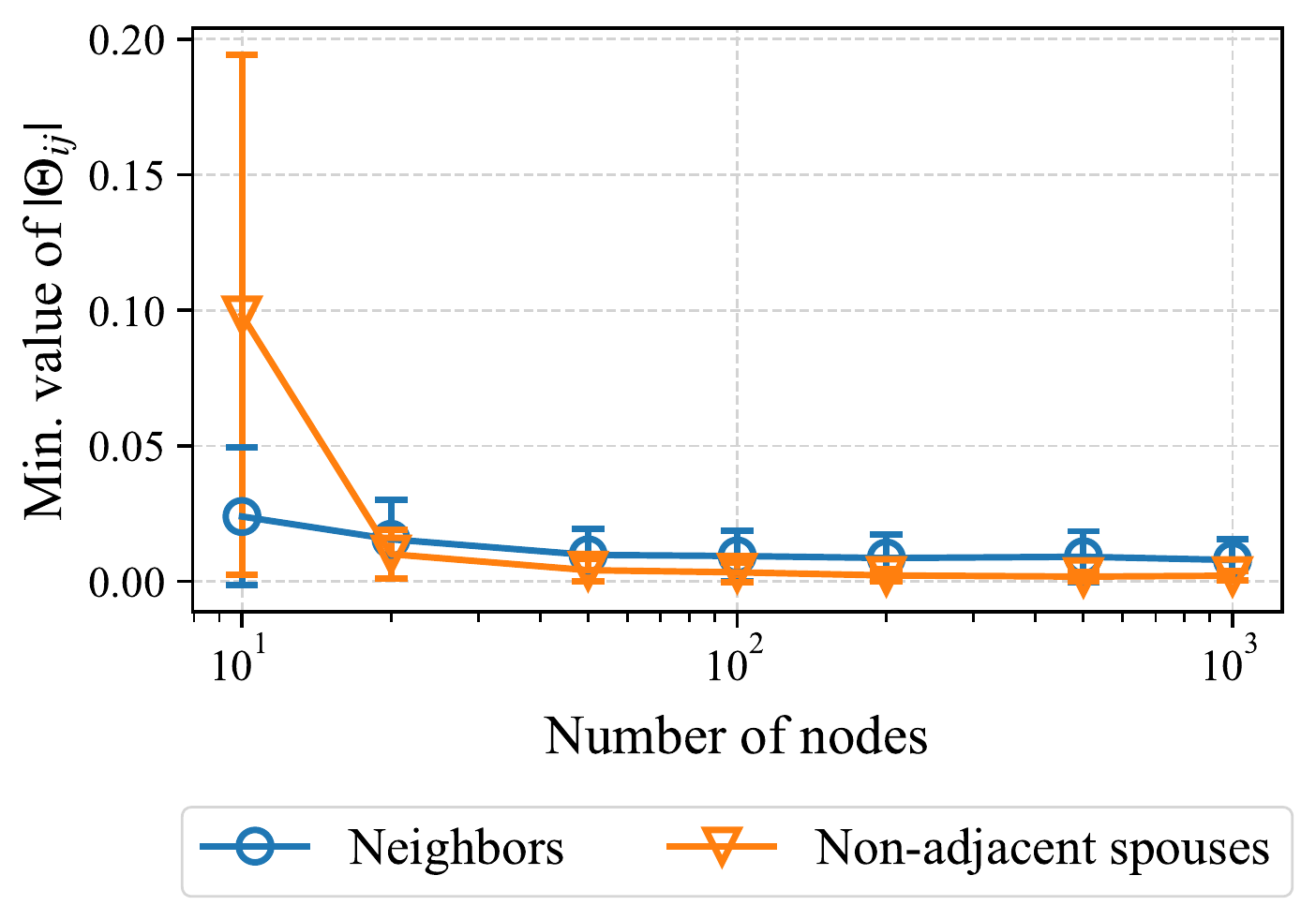}
         \caption{Expected degree of $8$.}
         \label{fig:sucf_degree_8}
     \end{subfigure}
        \caption{Visualizations of the minimum values $\min_{i,j} \{|\mathbf{\Theta}_{ij}|: X_i \text{ and } X_j \text{ correspond to a pair of}$
        $\text{neighbors in } \G \}$ and $\min_{i,j} \left\{|\mathbf{\Theta}_{ij}|: X_i \text{ and } X_j \text{ correspond to a pair of non-adjacent spouses in } \G \right\}$ computed across $100$ simulations. Different number of nodes and expected degrees are considered. X-axes are visualized in log scale.}
        \label{fig:supp_sucf}
\end{figure}

\section{Proofs}\label{sec:proofs}

\subsection{Proof of Theorems \ref{thm:moral_graph} and \ref{thm:super_structure}}\label{sec:proof_sucf_sscf}
We first describe Lemmas \ref{lemma:theta_entries} and \ref{lemma:theta_subgraph} required to prove the main theorems, and these lemmas are similar to Lemma 1 and Theorem 2 by \citet{Loh2014high}, respectively, but we do assume that $\B$ is strictly upper/lower triangular. Here, Lemma \ref{lemma:theta_entries} is straightforward from expanding the equation $\InvCov = (\I-\B)\NoiseCov^{-1} (\I-\B)^{\T}$ with $\NoiseCov=\operatorname{diag}(\sigma_1^2, \dots, \sigma_d^2)$. 
\begin{lemma}\label{lemma:theta_entries}
Given a DAG $\G$ and distribution $\Pdistr$ that follow a linear Gaussian model with inverse covariance matrix $\InvCov$. The entries of $\Theta$ are given by
\begin{alignat*}{3}
\InvCov_{jk} &= -\sigma_j^{-2}\B_{kj}-\sigma_k^{-2}\B_{jk} + \sum_{\mathclap{\ell\neq j,k}}\sigma_\ell^{-2}\B_{j\ell}\B_{k\ell}, \qquad&& \forall j\neq k, \\
\InvCov_{jj} &= \sigma_j^{-2} + \sum_{\ell\neq j}\sigma_\ell^{-2}\B_{j\ell}^2, \qquad&& \forall j. \\
\end{alignat*}
\end{lemma}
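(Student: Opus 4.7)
The plan is a direct entry-wise expansion of the matrix identity $\InvCov = (\I - \B)\NoiseCov^{-1}(\I - \B)^{\T}$. Since $\NoiseCov^{-1}$ is the diagonal matrix $\operatorname{diag}(\sigma_1^{-2},\dots,\sigma_d^{-2})$, the usual double sum arising from a triple matrix product collapses to a single sum indexed by the shared diagonal coordinate, yielding
\[
\InvCov_{jk} \;=\; \sum_{\ell} \sigma_\ell^{-2}\,(\delta_{j\ell} - \B_{j\ell})(\delta_{k\ell} - \B_{k\ell}),
\]
where $\delta$ denotes the Kronecker delta and I have written $(\I-\B)_{i\ell} = \delta_{i\ell} - \B_{i\ell}$.

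Next I would expand the parenthesised product into its four terms and handle each one separately. The two terms containing a single Kronecker delta collapse under the $\ell$-sum to $\sigma_j^{-2}\B_{kj}$ and $\sigma_k^{-2}\B_{jk}$, respectively. The $\delta_{j\ell}\delta_{k\ell}$ term contributes $\sigma_j^{-2}$ when $j = k$ and vanishes otherwise. The remaining quadratic piece is $\sum_\ell \sigma_\ell^{-2}\B_{j\ell}\B_{k\ell}$, which accounts for the moral-graph contributions coming from common children.

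The only remaining issue is trimming the index set in the quadratic piece from all of $\{1,\dots,d\}$ down to $\ell \neq j,k$, as in the statement. This is where acyclicity enters: since $\G$ is a DAG it has no self-loops, so $\B_{ii} = 0$ for every $i$; hence the $\ell = j$ and $\ell = k$ summands of $\sum_\ell \sigma_\ell^{-2}\B_{j\ell}\B_{k\ell}$ vanish identically and can be dropped. Assembling the contributions produces the off-diagonal formula for $j \neq k$. For the diagonal case $j = k$, the two cross-terms both collapse to $\sigma_j^{-2}\B_{jj} = 0$ and the quadratic sum restricts to $\ell \neq j$ by the same self-loop argument, giving $\InvCov_{jj} = \sigma_j^{-2} + \sum_{\ell \neq j} \sigma_\ell^{-2}\B_{j\ell}^2$.

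There is no genuine obstacle here: the lemma is a pure bookkeeping exercise, as the paper itself flags by calling it ``straightforward''. The diagonality of $\NoiseCov^{-1}$ does the structural work of reducing a quadruple index to a single one, and the DAG hypothesis is invoked only to kill the vacuous self-loop contributions so that the index set of the quadratic sum matches the stated form.
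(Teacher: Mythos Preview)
Your proposal is correct and follows exactly the approach the paper indicates: a direct entry-wise expansion of $\InvCov = (\I-\B)\NoiseCov^{-1}(\I-\B)^{\T}$ using the diagonality of $\NoiseCov^{-1}$, together with $\B_{ii}=0$ to trim the index set. The paper gives no further detail beyond calling this expansion ``straightforward'', so your write-up is in fact more explicit than the original.
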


\begin{lemma}\label{lemma:theta_subgraph}
Given a DAG $\G$ and distribution $\Pdistr$ that follow a linear Gaussian model with inverse covariance matrix $\InvCov$, the structure defined by $\operatorname{supp}(\InvCov)$ is a subgraph of the moralized graph of the true DAG $\G$.
\end{lemma}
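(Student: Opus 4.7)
The plan is to argue the contrapositive at the level of entries: if $\{X_j, X_k\}$ is not an edge in the moralized graph of $\G$, then $\InvCov_{jk} = 0$. By Lemma~\ref{lemma:theta_entries}, for $j \neq k$ we have the closed form
\begin{equation*}
\InvCov_{jk} \;=\; -\sigma_j^{-2}\B_{kj} \;-\; \sigma_k^{-2}\B_{jk} \;+\; \sum_{\ell \neq j,k} \sigma_\ell^{-2}\,\B_{j\ell}\B_{k\ell},
\end{equation*}
so it suffices to show that non-adjacency in the moral graph forces every summand on the right-hand side to vanish.

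First I would translate the two notions of non-adjacency using structural minimality. Non-adjacency of $X_j$ and $X_k$ in the moralized graph is equivalent to the conjunction of two graphical conditions on $\G$: (i) $X_j$ and $X_k$ are not adjacent in $\G$, and (ii) they have no common child in $\G$. By structural minimality, (i) translates algebraically to $\B_{jk} = \B_{kj} = 0$, killing the first two terms. Condition (ii) says that for every $\ell \neq j, k$ we cannot have both $X_j \to X_\ell$ and $X_k \to X_\ell$; equivalently, at least one of $\B_{j\ell}$ and $\B_{k\ell}$ is zero, so each product $\B_{j\ell}\B_{k\ell}$ in the sum vanishes. Substituting back yields $\InvCov_{jk} = 0$, which is exactly the contrapositive of the claim that $\supp(\InvCov)$ is contained in the edge set of the moralized graph of $\G$.

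The argument is essentially a direct computation, so I do not anticipate a substantive obstacle; the only thing to be careful about is the direction of the inclusion. We are claiming containment of $\supp(\InvCov)$ in the moral graph, not equality, so we do not need to worry about the reverse phenomenon where nonzero terms in the formula happen to cancel along an adjacent or spouse pair. Such cancellations are exactly the violations of SUCF/SSCF that Theorems~\ref{thm:moral_graph} and \ref{thm:super_structure} in the main text are designed to rule out; they are irrelevant here because they only cause $\supp(\InvCov)$ to be \emph{smaller} than the moral graph, preserving the subgraph conclusion.
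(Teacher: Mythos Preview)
Your proposal is correct and follows essentially the same route as the paper: take a pair $X_j, X_k$ not adjacent in the moral graph, use Lemma~\ref{lemma:theta_entries}, and observe that non-adjacency in $\G$ kills the first two terms while the absence of a common child kills each product in the sum. If anything, your phrasing is slightly more precise than the paper's, which writes ``$\B_{j\ell}=0$ and $\B_{k\ell}=0$'' where it really means that for each $\ell$ at least one of the two vanishes.
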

\begin{proof}
Let  $X_j$ and $X_k$, $j\neq k$ be two variables that are not adjacent in the moralized graph of $\G$. Then it suffices to show that $\InvCov_{jk}=\InvCov_{kj}=0$. Clearly, $X_j$ and $X_k$ must not be adjacent in the DAG $\G$, indicating that $\B_{kj}=\B_{jk}=0$. They also cannot share a common child; otherwise they must be adjacent in the moralized graph. Therefore, we have $\B_{j\ell}=0$ and $\B_{k\ell}=0$ for $\ell\neq j, k$. Applying Lemma \ref{lemma:theta_entries} gives $\InvCov_{jk}=\InvCov_{kj}=0$.
\end{proof}

With the above lemmas, we now provide the proofs of the main results.
\ThmMoralGraph*

\begin{proof}
We proceed by contraposition in both parts of the proof.

If part: \\
Suppose either the SSCF or SUCF assumption is violated, i.e., there exists a collider $X_j \rightarrow X_i \leftarrow X_k$ in the DAG $\G$ such that $X_j\independent X_k | \V\setminus\{X_j, X_k\}$. This indicates that $\InvCov_{jk}=\InvCov_{kj}=0$. Since $X_j$ and $X_k$ are either neighbors or spouses, there exists an edge between them in the moralized graph of $\G$, but is not contained in the structure defined by $\operatorname{supp}(\InvCov)$, showing that they are not the same.

Only if part: \\
Suppose that the structure defined by $\operatorname{supp}(\InvCov)$ is not the same as the moralized graph of $\G$. Then, by Lemma \ref{lemma:theta_subgraph}, there exists a pair of variables $X_j$ and $X_k$, $j\neq k$ that are adjacent in the moralized graph but $\InvCov_{jk}=\InvCov_{kj}=0$. In the linear Gaussian case, we have $X_j\independent X_k | \V\setminus\{X_j, X_k\}$. It remains to consider the following cases:
\begin{itemize}[leftmargin=2.5em]
  \item If variables $X_j$ and $X_k$ correspond to a pair of non-adjacent spouses in $\G$, then they have an unshielded collider, indicating that the SUCF assumption is violated.
  \item Otherwise, variables $X_j$ and $X_k$ correspond to a pair of neighbors in $\G$. Assume without loss of generality that $X_j$ is a parent of $X_k$, i.e., $X_j\rightarrow X_k\in \E$. This implies that $\B_{jk} \neq 0$ and $\B_{kj} = 0$, which, by Lemma \ref{lemma:theta_entries}, yields
\[
\InvCov_{jk} = -\sigma_k^{-2}\B_{jk} + \sum_{\mathclap{\ell\neq j, k}}\sigma_\ell^{-2}\B_{j\ell}\B_{k\ell}=0.
\]
Since $\sigma_k^{-2}\B_{jk} \neq 0$, there exists a variable $X_i$ with $i\neq j, k$ such that $\sigma_i^{-2}\B_{ji}\B_{ki}\neq 0$. We then have $\B_{ji}\neq 0$ and $\B_{ki}\neq 0$, indicating that $X_i$ is a common child of the variables $X_j$ and $X_k$. In this case, $X_j \rightarrow X_i \leftarrow X_k$ forms a shielded collider in $\G$, which, with the CI relation $X_j\independent X_k | \V\setminus\{X_j, X_k\}$, implies that the SSCF assumption is violated.
\end{itemize}
In both cases, either the SUCF or SSCF assumption is violated.
\end{proof}

\ThmSuperStructure*

\begin{proof}
We proceed by contraposition in both parts of the proof. Note that the proof is similar to that of Theorem \ref{thm:moral_graph}.

If part: \\
Suppose the SSCF assumption is violated, i.e., there exists a shielded collider $X_j \rightarrow X_i \leftarrow X_k$ in the DAG $\G$ such that $X_j\independent X_k | \V\setminus\{X_j, X_k\}$. This indicates that $\InvCov_{jk}=\InvCov_{kj}=0$. Since $X_j$ and $X_k$ are neighbors, there exists an edge between them in $\G$, but is not contained in the structure defined by $\operatorname{supp}(\InvCov)$, showing that it is not a super-structure of the true DAG $\G$.

Only if part: \\
Suppose that the structure defined by $\operatorname{supp}(\InvCov)$ is not a super-structure of the DAG $\G$. Then, there exists a pair of neighbors $X_j$ and $X_k$, $j\neq k$ in $\G$ such that $\InvCov_{jk}=\InvCov_{kj}=0$. In the linear Gaussian case, we have $X_j\independent X_k | \V\setminus\{X_j, X_k\}$. Assume without loss of generality that $X_j$ is a parent of  $X_k$, i.e., $X_j\rightarrow X_k\in \E$. This implies that $\B_{jk} \neq 0$ and $\B_{kj} = 0$, which, by Lemma \ref{lemma:theta_entries}, yields
\[
\InvCov_{jk} = -\sigma_k^{-2}\B_{jk} + \sum_{\mathclap{\ell\neq j, k}}\sigma_\ell^{-2}\B_{j\ell}\B_{k\ell}=0.
\]
Since $\sigma_k^{-2}\B_{jk} \neq 0$, there exists a variable $X_i$, $i\neq j,k$ such that $\sigma_i^{-2}\B_{ji}\B_{ki}\neq 0$. We then have $\B_{ji}\neq 0$ and $\B_{ki}\neq 0$, indicating that $X_i$ is a common child of the variables $X_j$ and $X_k$. In this case, $X_j \rightarrow X_i \leftarrow X_k$ forms a shielded collider in $\G$, which, with the CI relation $X_j\independent X_k | \V\setminus\{X_j, X_k\}$, implies that the SSCF assumption is violated.
\end{proof}

\subsection{Proof of Theorem \ref{thm:exact_search}}\label{sec:proof_exact_search}
\ThmExactSearch*
\begin{proof}
We provide a proof by contrapositive in both directions based on the consistency of the BIC score \citep{Haughton1988choice,Chickering2002optimal}.

If part: \\
Suppose that exact score-based search asymptotically outputs a DAG $\mathcal{H}$ (having the highest BIC score) that does not belong to the MEC of the true DAG $\G$. Since the BIC score is known to be consistent, $(\mathcal{H}, \Pdistr)$ must satisfy the Markov assumption, because otherwise its BIC score is lower than that of the true DAG $\G$ and exact search would not have output $\mathcal{H}$. By assumption, the BIC score of $\mathcal{H}$ is higher than that of $\G$, which, by the consistency of BIC, implies that $|\mathcal{H}|\leq|\G|$, and therefore, $(\G, \Pdistr)$ does not satisfy the SMR assumption.

Only if part: \\
Suppose that $(\G, \Pdistr)$ does not satisfy the SMR assumption. Then there exists a DAG $\mathcal{H}$ not in the MEC of $\G$ such that $|\mathcal{H}|\leq|\G|$, and $(\mathcal{H}, \Pdistr)$ satisfies the Markov assumption.  Without loss of generality, we choose $\mathcal{H}$ with the least number of edges. We first consider the case in which $|\mathcal{H}|<|\G|$. Since both $\mathcal{H}$ and $\G$ satisfy the Markov assumption, by the consistency of BIC, the BIC score of $\mathcal{H}$ is higher than that of $\G$, which implies that exact score-based search will not output any DAG from the MEC of $\G$. For the case with $|\mathcal{H}|=|\G|$, since they are both Markov with distribution $\Pdistr$, they have the same BIC score. Therefore, exact search will output a DAG that belongs to the MEC of either $\mathcal{H}$ or $\G$, and is not guaranteed to output a DAG from the MEC of the true DAG $\G$.
\end{proof}

\section{Implementation Details}\label{sec:implementation_details}
This section provides the implementation details of the proposed Local A* method and the baselines.
\subsection{Local A*}
Local A* first uses inverse covariance estimation to discover a super-structure of the underlying DAG, and then applies the A*-SS method on the local clusters formed by each variable and its neighbors within two hops in the super-structure, using some further strategy to reduce search space (see Section \ref{sec:local_search}). In our experiments, we use GLasso to estimate the support of the inverse covariance matrix, implemented through the \texttt{scikit-learn} package \citep{Pedregosa2011scikit}. We set the coefficient of $\ell_1$ penalty term to $0.05$ for $20$-node graphs or smaller, and otherwise to $0.2$. For graphs with more than $40$ nodes, a thresholding step is applied on the estimated covariance matrix to remove the entries whose absolute values are less than $0.03$. We use relatively small values for conservative variable selection. If needed, one may further use suitable model selection methods (e.g., cross-validation) to select these hyperparameters. Since we focus on the linear Gaussian setting, we use the BIC score for the exact search procedure. To accelerate Algoritnm \ref{alg:local_search}, we run the local search procedure on $12$ local clusters in parallel (i.e., on $12$ CPUs). The code is available at \url{https://github.com/ignavierng/local-astar}.

\subsection{Baselines}
The implementation details of the baselines are described below:

\begin{itemize}[leftmargin=2.5em]
  \item PC and FGES are implemented using the \texttt{py-causal} package \citep{Scheines1998tetrad} distributed under the LGPL 2.1 license. For the former, we use the Conservative PC algorithm \citep{Ramsey2006adjacency} with Fisher Z test, while the BIC score \citep{Schwarz1978estimating} is adopted for the latter.
  \item The implementation of MMHC is available through the \texttt{bnlearn} package \citep{Scutari2010learning} in R that is published under the GPL-3 license.
  \item SP is implemented using the \texttt{causaldag} package in Python under the 3-Clause BSD License.
  \item We use our own Python implementation of Triplet A* as we were not able to run the official C++ implementation on graphs with more than $5$ nodes. Similar to the proposed Local A* approach, parallel computing is used to accelerate the iterative search procedure of Triplet A*.
\end{itemize}

We use the default hyperparameters unless otherwise stated. For a fair comparison, we run all experiments of the baselines on $12$ CPUs.

\section{Supplementary Experiment Results}

\subsection{Exact Violations of Faithfulness}\label{sec:supp_experiment_violation_faithfulness}
To demonstrate the efficacy of GLasso for estimating super-structures with weaker assumptions, we experiment with several examples by using GLasso and MMPC to discover the direct neighbors of the true DAG when faithfulness is guaranteed to be violated. Consider an example where $X\rightarrow Y\rightarrow Z \rightarrow W$ and $X\rightarrow W$ such that $X\independent W$ due to path cancellation. In this case, the faithfulness assumption is violated but SSCF holds. In particular, we consider the linear SEM $X = N_X, Y = X + N_Y, Z = Y + N_Z, W = - X + Z + N_W$ where $N_X, N_Y, N_Z, N_W \sim \mathcal{N}(0, 1)$. We conduct $100$ random simulations for $20$, $100$, and $10^6$ samples. For $20$ samples, GLasso is able to discover all the direct neighbors (in the true DAG) in $98$ of the simulations, whereas for $100$ and $10^6$ samples, it discovers the neighbors in all simulations. For MMPC, it fails to discover the edge between the direct neighbors $X$ and $W$ in $69$, $56$, and $98$ of the simulations for $20$, $100$, and $10^6$ samples, respectively, because of the unfaithful independency $X\independent W$. This demonstrates that GLasso is able to recover the direct neighbors in cases where MMPC fails, as the former requires only the SSCF assumption that is intuitively much weaker than faithfulness required by the latter.

\subsection{Structural Intervention Distance}\label{sec:supp_experiment_sid}
We use SHD and F1 score to compare the different CPDAGs. If we are given DAGs, then we can compute their corresponding SID in a straightforward way. Comparing CPDAGs with SID is more complicated, although it is doable according to \citet[Section~2.4.2]{Peters2013structural}, since it requires computing the lower and upper bounds of the SID. For this reason, we did not include the results of SID in this paper. Nevertheless, the observations of SID are nearly identical to those based on SHD. For instance, for the dataset considered in Figure \ref{fig:small_graph_300_samples} (i.e., $7$-node graphs with $300$ samples) with expected degree of $2$, the average lower and upper bounds of the SID for A*-SS, Local A*, Triplet A*, PC, FGES, MMHC, and SP are $(7.5, 10.5)$, $(7.5, 10.5)$, $(10.2, 13.1)$, $(15.0, 20.1)$, $(14.2, 17.8)$, $(16.2, 18.4)$, and $(14.3, 18.4)$, respectively.

\subsection{Different Super-Structure Estimation Methods}\label{sec:supp_experiment_super_graph_estimation}
This section provides further empirical results to compare the efficacy of GLasso and MMPC for estimating super-structures in practice (see Section \ref{sec:experiment_super_graph_estimation}).
\begin{figure}[!h]
  \centering
  \includegraphics[width=1.0\textwidth]{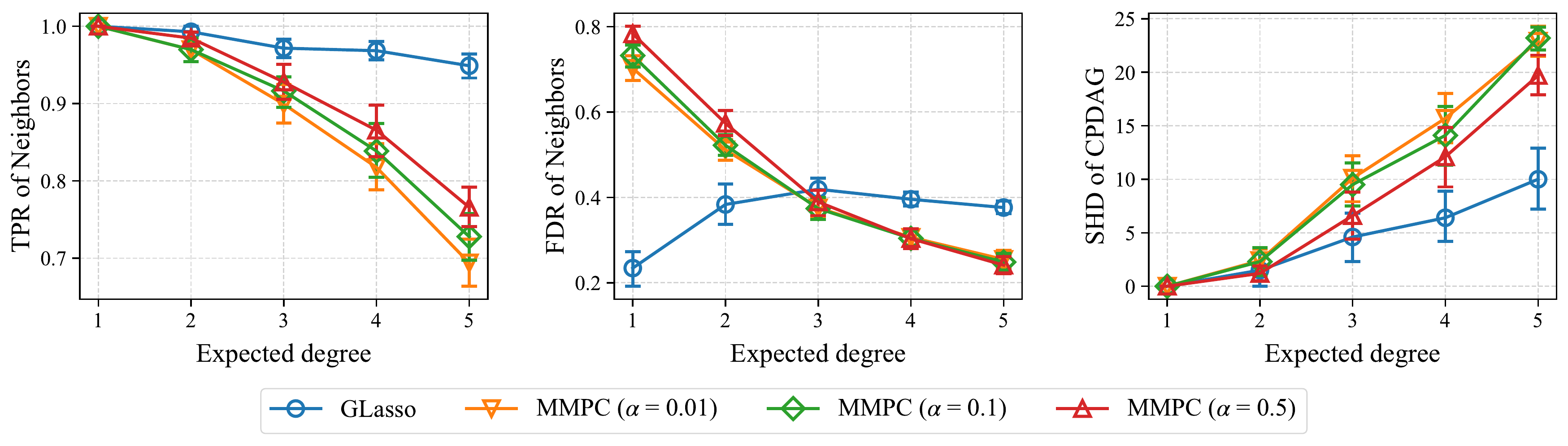}
  \caption{Results of different super-structure estimation methods on $10$-node graphs with different degrees. The sample size is $n=10000$. Lower is better, except for TPR.}
  \label{fig:super_graph_estimation_10000_samples}
\end{figure}

\subsection{Comparison with Other Baselines}\label{sec:supp_experiment_small_graph}
This section provides further empirical results to compare the proposed methods to the baselines (see Section \ref{sec:experiment_small_graph}).
\begin{figure}[!h]
\centering
\subfloat[$n=300$.]{
  \includegraphics[width=1.0\textwidth]{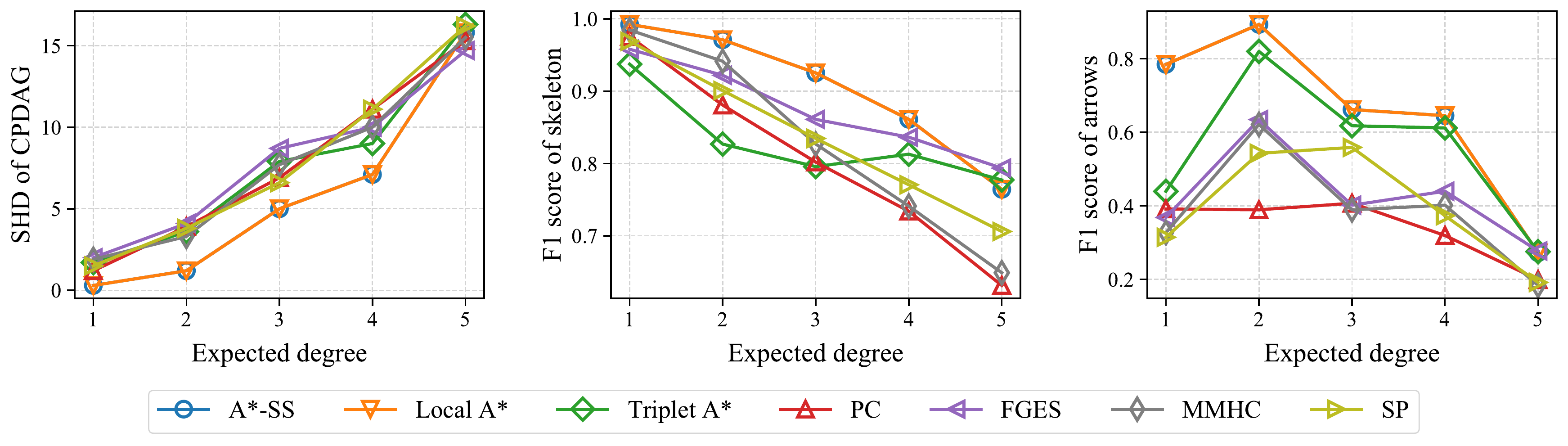}
  \label{fig:small_graph_300_samples}
} \\
\subfloat[$n=10000$.]{
  \includegraphics[width=1.0\textwidth]{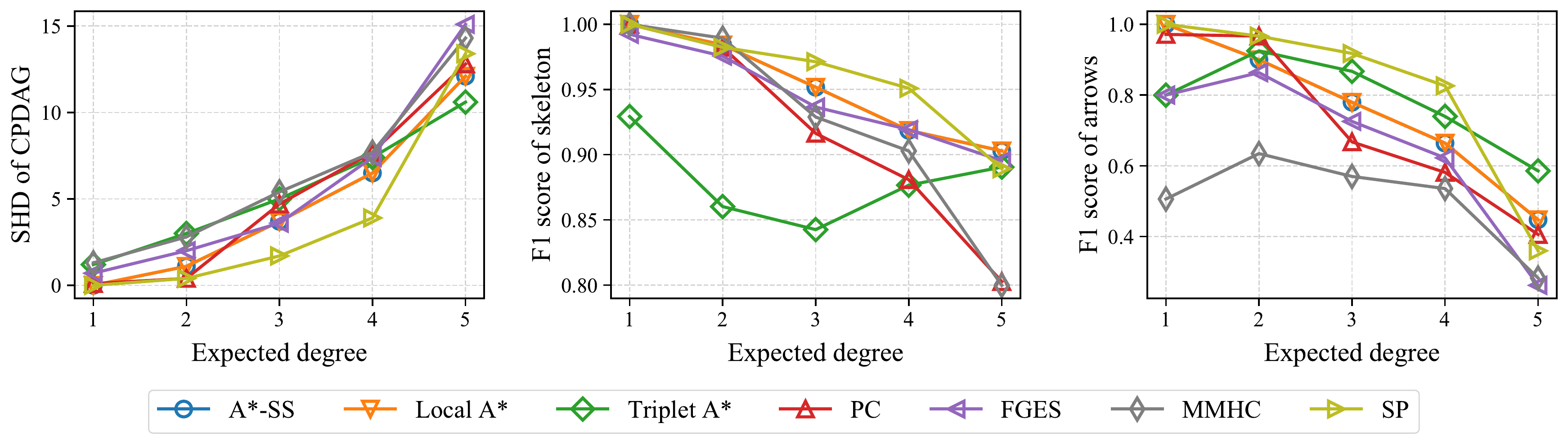}
  \label{fig:small_graph_10000_samples}
}
\caption{Results of different structure learning methods on $7$-node graphs with different degrees and sample sizes. Higher is better, except for SHD. For better visualization, we do not include the standard errors here as each panel has a number of lines.}
\label{fig:small_graph}
\end{figure}

\clearpage
\subsection{Scaling up A* to Large Graphs}\label{sec:supp_experiment_large_graph}
This section provides empirical results for Local A* and MMHC on graphs with $\{50, 100, 150, 200, 250, 300\}$ nodes (see Section \ref{sec:experiment_large_graph}).
\begin{figure}[!h]
  \centering
  \includegraphics[width=1.0\textwidth]{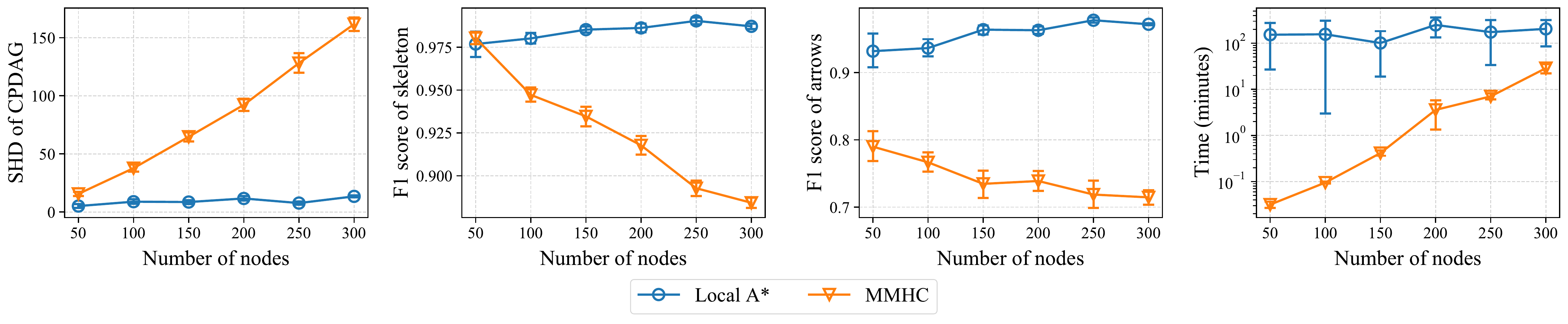}
  \caption{Results and time for Local A* and MMHC on graphs with different sizes. The sample size is $n=10000$. Lower is better, except for F1 score.}
  \label{fig:very_large_graph_10000_samples}
\end{figure}
\end{appendices}

\end{document}